\documentclass{article}

\usepackage{PRIMEarxiv}

\usepackage[utf8]{inputenc} 
\usepackage[T1]{fontenc}    
\usepackage{hyperref}       
\usepackage{url}            
\usepackage{booktabs}  
\usepackage{xcolor}
\usepackage{enumitem}
\usepackage{amssymb}
\usepackage{geometry}
\usepackage[round]{natbib}
\usepackage{etoolbox}
\definecolor{mybrown}{HTML}{662600}

\usepackage{amsfonts}       
\usepackage{nicefrac}       
\usepackage{microtype}      
\usepackage{lipsum}
\usepackage{fancyhdr}       
\usepackage{graphicx}       
\usepackage{color}

\usepackage[table]{xcolor}
\usepackage[most]{tcolorbox}
\usepackage{pgfmath}
\usepackage{amsmath}
\usepackage{amsthm}
\usepackage{multirow}
\usepackage{wrapfig}

\definecolor{orange50}{HTML}{FFECE0} 
\definecolor{orange60}{HTML}{FFD6B3}
\definecolor{orange70}{HTML}{FFBB80}
\definecolor{orange80}{HTML}{FF9F4D}
\definecolor{orange90}{HTML}{FF841A}
\definecolor{orange100}{HTML}{CC4E00} 

\newtcbox{\asrbox}[2][]{enhanced, box align=base, rounded corners=southeast,
    colback=#1, colframe=#1, boxrule=0pt, arc=3pt, outer arc=3pt, top=0pt, bottom=0pt,
    left=0.5pt, right=0.5pt, boxsep=0.5pt, nobeforeafter, #2}

\newcommand{\asrcell}[1]{%
  \begingroup
  \edef\value{#1}%
  \ifdim\value pt<50pt
    #1%
  \else
    \ifdim\value pt<60pt
      \cellcolor{orange50}#1%
    \else
      \ifdim\value pt<70pt
        \cellcolor{orange60}#1%
      \else
        \ifdim\value pt<80pt
          \cellcolor{orange70}#1%
        \else
          \ifdim\value pt<90pt
            \cellcolor{orange80}#1%
          \else
            \cellcolor{orange90}#1%
          \fi
        \fi
      \fi
    \fi
  \fi
  \endgroup
}

\usepackage{amsmath,amsfonts,bm}

\def\eqref#1{equation~\ref{#1}}

\def\1{\bm{1}}

\def\mC{{\bm{C}}}

\def\mI{{\bm{I}}}

\DeclareMathAlphabet{\mathsfit}{\encodingdefault}{\sfdefault}{m}{sl}
\SetMathAlphabet{\mathsfit}{bold}{\encodingdefault}{\sfdefault}{bx}{n}

\newcommand{\R}{\mathbb{R}}

\newcommand{\Cov}{\mathrm{Cov}}

\DeclareMathOperator*{\argmin}{arg\,min}

\definecolor{myteal}{HTML}{008080}
\definecolor{myorange}{HTML}{FF7F0E}

\newtheorem{assumption}{Assumption}
\newtheorem{theorem}{Theorem}
\newtheorem{corollary}{Corollary}
\newtheorem{lemma}{Lemma}
\newtheorem{proposition}{Proposition}
\theoremstyle{definition}
\newtheorem{definition}{Definition}
\theoremstyle{remark}

\theoremstyle{plain}

\renewcommand{\eqref}[1]{\textup{(\ref{#1})}}

\newcommand{\bx}{\boldsymbol{x}}
\newcommand{\bz}{\boldsymbol{z}}
\newcommand{\bmu}{\boldsymbol{\mu}}
\newcommand{\bv}{\boldsymbol{v}}
\newcommand{\bu}{\boldsymbol{u}}
\newcommand{\bw}{\boldsymbol{w}}
\newcommand{\bq}{\boldsymbol{q}}
\newcommand{\br}{\boldsymbol{r}}
\newcommand{\bc}{\boldsymbol{c}}
\newcommand{\be}{\boldsymbol{e}}
\newcommand{\bme}{\boldsymbol{m}}
\newcommand{\bzero}{\boldsymbol{0}}
\newcommand{\bSig}{\boldsymbol{\Sigma}}
\newcommand{\btheta}{\boldsymbol{\theta}}
\newcommand{\bphi}{\boldsymbol{\phi}}
\newcommand{\bA}{\mathbf{A}}
\newcommand{\bb}{\boldsymbol{b}}
\newcommand{\bM}{\mathbf{M}}
\newcommand{\bQ}{\mathbf{Q}}
\newcommand{\tx}{\tilde{\boldsymbol{x}}}
\newcommand{\tw}{\tilde{\boldsymbol{w}}}
\newcommand{\tu}{\tilde{\boldsymbol{u}}}
\newcommand{\tz}{\tilde{\boldsymbol{z}}}
\newcommand{\tmu}{\tilde{\boldsymbol{\mu}}}
\newcommand{\tv}{\tilde{\boldsymbol{v}}}
\newcommand{\Prob}{\mathbb{P}}
\newcommand{\Ex}{\mathbb{E}}
\newcommand{\CACC}{\mathrm{CACC}}
\newcommand{\ASR}{\mathrm{ASR}}
\DeclareMathOperator{\sgn}{sign}

\definecolor{mycolor}{RGB}{204,102,0}

\hypersetup{colorlinks,linkcolor={mycolor},citecolor={mycolor},urlcolor={mycolor}} 

\graphicspath{{figs/}}

\title{Why Backdooring Neural Networks is so Easy?}

\author{
  Issam Seddik \\
  Université Paris-Saclay, CEA LIST \\
  Palaiseau, France \\
  \texttt{issam.seddik@cea.fr} \\
  \And
  Mohamed El Amine Seddik \\
  AI Research Center, Technology Innovation Institute (TII) \\
  Abu Dhabi, UAE \\
  \texttt{mohamed.seddik@tii.ae}
}

\begin{document}
\maketitle

\begin{abstract}
Securing modern AI systems against backdoor attacks remains an open challenge and requires fundamentally principled estimates of the adversary’s budget—the poison fraction $\pi$ and trigger strength $\alpha$ needed to construct successful yet stealthy attacks. Motivated by recent empirical evidence that poisoning large language models can require a nearly constant number of malicious samples even as clean datasets grow, we derive an exact closed-form analysis of a quadratic neuron trained on a poisoned Gaussian mixture. We show, perhaps counterintuitively, that the same feature-learning dynamics that make neural networks powerful can also make them more vulnerable to backdoors. Specifically, with clean accuracy preserved to first order, $\mathcal{O}(\pi)$, we demonstrate that lazy learning imposes the inverse-square-root scaling $\alpha \propto \pi^{-1/2}$ for a successful attack, while feature learning induces a quadratic detector whose loss margin scales as $\mathcal{O}(\alpha^4)$, improving the attack budget to $\alpha \propto \pi^{-1/4}$. Consequently, nonlinear feature learning substantially reduces the trigger strength required at small poison fractions, thereby in a sense making feature learners more backdoor vulnerable. 
These results provide a theoretical mechanism consistent with large-scale empirical observations and demonstrate that security audits based on linear heuristics can systematically underestimate backdoor vulnerability in the widely adopted feature-learning regimes.

\end{abstract}

\section{Introduction}

Backdoor attacks are among the main threats to the AI supply chain \citep{gu2019badnets,chen2017targeted,li2022backdoor}. An adversary who controls a fraction $\pi$ of the training data adds a trigger pattern of strength $\alpha$ to these samples and relabels them with a target class. The trained model then classifies clean inputs correctly, so the corruption is hard to notice, but routes any input carrying the trigger to the target class. Defending against such attacks relies on auditing frameworks that verify the provenance of a trained model \citep{jia2021proof,choi2023tools,seddik2025pots}, and the cost of an audit depends on how much the adversary must invest: if planting a backdoor with a small poison fraction requires a large, visible trigger, moderate audit budgets suffice \citep{hanneke2022optimal,kallas2025game}. The relevant quantity is therefore the adversary's \emph{budget curve}, the smallest trigger strength $\alpha(\pi)$ that achieves a target attack success rate at poison fraction $\pi$.

Existing estimates of this curve are either empirical \citep{saha2020hidden,souly2025poisoning} or can be derived for lazy learners, i.e.\ linear, kernel or random-feature models whose internal representation does not move during training \citep{jacot2018neural,chizat2019lazy,rahimi2007random,hayase2022few}. One can show that in these models the trigger acts only through a shift of the mean of the poisoned samples, which leads to the budget curve $\alpha\propto\pi^{-1/2}$: reducing the number of poisoned samples by a factor of $100$ requires a trigger ten times stronger. Modern networks, however, operate in the feature-learning regime, where the representation adapts to the data \citep{yang2021tensor,ba2022high,damian2022neural}, and a recent large-scale study \citep{souly2025poisoning} found that poisoning language models of very different sizes requires a nearly constant absolute number of malicious documents, i.e.\ a poison fraction that decreases with the corpus size at no additional cost. This raises the question of whether feature learning changes the budget curve, and by how much.

In this paper, we address this question with a statistical model that is simple enough to be solved exactly, yet retains a trainable nonlinear feature. We consider a single quadratic neuron, $f(\bx)=\bw^\top\bx+\tfrac\beta2(\bu^\top\bx)^2+c$, trained by least squares on a Gaussian mixture in which a fraction $\pi$ of one class has been shifted by a trigger $\alpha\bv$ and relabeled, and we compare two training regimes:
\begin{itemize}[leftmargin=2em,itemsep=1pt,topsep=2pt]
\item \emph{Linear regime}: the quadratic feature is not used ($\beta\equiv0$) and the model is a linear classifier. This is the situation described by lazy training; we show that it is recovered exactly whenever the feature direction $\bu$ is orthogonal to the trigger.
\item \emph{Feature-learning regime}: the feature direction $\bu$ is adapted to the data and the curvature $\beta$ is trained. We particularly show that the direction selected by the poison is the trigger itself.
\end{itemize}
We derive, in closed form, the clean accuracy and the attack success rate of the trained model for arbitrary noise covariance, alignment between signal and trigger, and arbitrary finite poison fraction and trigger strength (Theorem~\ref{thm:main}). In the linear regime the quantity that governs the attack is the trigger energy $\pi\alpha^2$ and the budget curve is $\alpha\propto\pi^{-1/2}$ (Theorem~\ref{thm:linear}). In the feature-learning regime the poison creates a quadratic trigger detector whose contribution to the margin scales as $\alpha^4$; the governing quantity becomes $\pi\alpha^4$ and the budget curve flattens to $\alpha\propto\pi^{-1/4}$ (Theorem~\ref{thm:fl}). In both regimes the clean accuracy is unaffected to first order in $\pi$ and the attack success rate saturates below one at the same explicit ceiling: feature learning does not raise the ceiling, it reaches it at a much lower cost. Our results are confirmed by experiments on synthetic data and on MNIST, Fashion-MNIST and CIFAR-10, and Figure~\ref{fig:mlp} shows the same separation for a neural network trained in the lazy and in the rich regime, motivating our analysis beyond our simplified setting.

\begin{figure}[t]
  \centering
  \includegraphics[width=\linewidth]{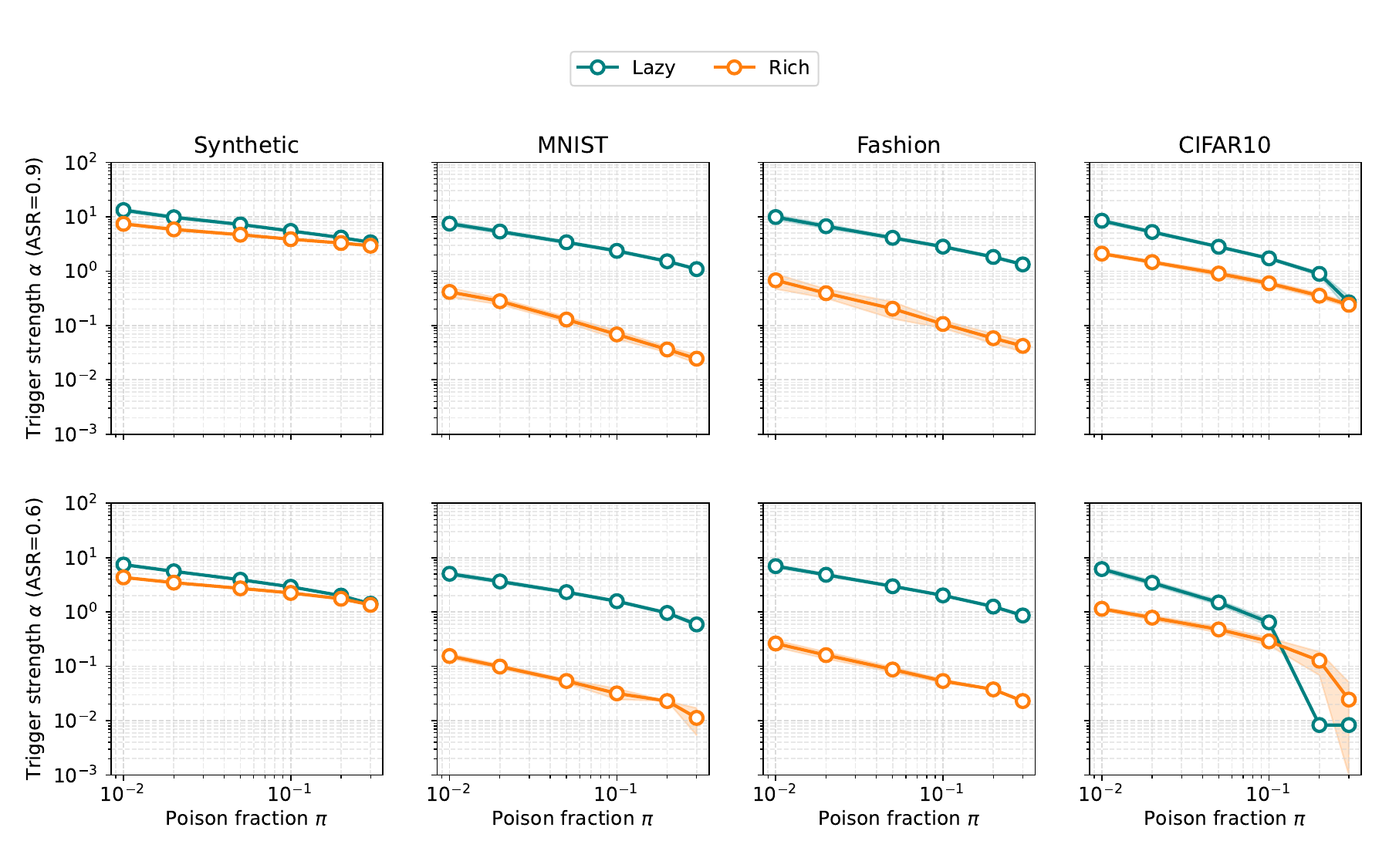}
  \caption{\textbf{Feature learning lowers the adversary's budget.} \emph{Trigger strength $\alpha$ required to reach an attack success rate of $0.9$ (top) and $0.6$ (bottom) as a function of the poison fraction $\pi$, for a network trained in the lazy regime (frozen features) and in the rich regime (feature learning), on synthetic Gaussian data, MNIST, Fashion-MNIST and CIFAR-10; shaded areas are standard deviations over runs. Feature learning requires a smaller trigger at every poison fraction.}}
  \label{fig:mlp}
\end{figure}

\textbf{Related work:} Backdoor attacks were introduced by \citet{gu2019badnets}; subsequent work diversified the attack surface with blended \citep{chen2017targeted}, weight-level \citep{liu2018trojaning}, clean-label \citep{turner2019cleanlabel,shafahi2018poison}, hidden \citep{saha2020hidden} and frequency-domain \citep{xia2024waveattack} triggers; see \citet{li2022backdoor} for a survey. Defenses include the detection of poisoned samples \citep{tran2018spectral,hayase2021spectre,wang2019neural,gao2019strip}, the verification of the training process \citep{jia2021proof,choi2023tools,seddik2025pots} and post-hoc mitigation \citep{li2021anti,zhu2023selective}, whose limits for language models were shown by \citet{hubinger2024sleeper}. The audit budget is formalized as a game by \citet{kallas2025game}, and the estimates of \citet{jia2021proof,hanneke2022optimal} assume lazy learners. On the theoretical side, \citet{biggio2012poisoning,steinhardt2017certified} bounded the damage of a corrupted fraction, \citet{manoj2021excess} related backdoors to excess capacity, \citet{goldwasser2022planting} showed that undetectable backdoors can be planted in expressive models, and \citet{khaddaj2023rethinking} argued that a backdoor is a strong spurious feature; these results do not yield the trade-off between $\pi$ and $\alpha$. Our analysis builds on exactly solvable models of classification on Gaussian mixtures \citep{mignacco2020role,loureiro2021learning} and on the distinction between the lazy and the rich regime \citep{chizat2019lazy,woodworth2020kernel,ghorbani2021linearized,mei2022generalization}; the quadratic neuron we consider is the poisoning analogue of the single-step feature-learning models of \citet{ba2022high,damian2022neural}. The closest works to ours, \citet{hayase2022few,hanneke2022optimal}, study poisoning for lazy learners; we depart from them by treating a model that learns a feature, which is what changes the cost exponent.

The remainder of the paper is organized as follows. Section~\ref{sec:setup} presents the poisoning model, the quadratic neuron and the two training regimes. Our main theoretical results are presented in Section~\ref{sec:main_results}. We further present experiments in Section~\ref{sec:experiments} to support our findings. Finally, Section~\ref{sec:conclusion} concludes the paper. All proofs are deferred to the appendix.

\section{Theoretical Setup}
\label{sec:setup}

\subsection{Poisoning Model}
\label{sec:poison}

We consider a binary classification problem with inputs $\bx\in\R^d$ and labels $y\in\{+1,-1\}$. The clean data form a symmetric Gaussian mixture with class means $\pm\bmu$ and common within-class covariance $\bSig\succ0$. The attacker chooses a unit trigger direction $\bv\in\R^d$ and a strength $\alpha\ge0$, adds $\alpha\bv$ to a fraction $\pi\in(0,1)$ of the samples of the positive (source) class and flips their label to the negative (target) class. The training distribution is therefore the three-cluster mixture $\bx=\bme_c+\bz$, $\bz\sim\mathcal N(\bzero,\bSig)$, with
\begin{equation}
  \begin{array}{lllll}
    \mathcal{C}_+\;(\text{clean source})   : & \omega_+ = \tfrac{1-\pi}{2}, &
      \bme_+ = \phantom{-}\bmu,             & y = +1, \\[2pt]
    \mathcal{C}_-\;(\text{clean target})   : & \omega_- = \tfrac{1}{2},   &
      \bme_- = -\bmu,                       & y = -1, \\[2pt]
    \mathcal{C}_p\;(\text{poisoned}): & \omega_p = \tfrac{\pi}{2}, &
      \bme_p = \bmu+\alpha\bv,   & y = -1.
  \end{array}
  \label{eq:poisoned}
\end{equation}
No specific assumptions are placed on $\bSig$, on the alignment of $\bv$ with $\bmu$, or on the dimension $d$. The model abstracts both centralized training, where an insider corrupts part of the data, and outsourced training, where the owner only receives the final weights \citep{li2022backdoor,goldwasser2022planting}; in both cases the adversary's cost is the pair $(\pi,\alpha)$, with $\pi$ governing statistical stealth and $\alpha$ the visibility of the trigger.

The attacker's goal is a model that is accurate on clean inputs and misclassifies triggered inputs. For a trained classifier $\hat y=\sgn f(\bx)$ we measure, on a clean source point $\bx=\bmu+\bz$, a clean target point $\bx=-\bmu+\bz$ and a triggered source point $\bx^{\rm trig}=\bmu+\alpha\bv+\bz$ (test noise $\bz\sim\mathcal N(\bzero,\bSig)$), the clean accuracy and the attack success rate are given by
\begin{equation}
  \CACC = \tfrac{1}{2}\Bigl[\Prob(f>0\mid\mathcal{C}_+) + \Prob(f<0\mid\mathcal{C}_-)\Bigr],
  \qquad
  \ASR = \Prob\bigl(f(\bx^{\rm trig})<0\bigr).
  \label{eq:metrics}
\end{equation}

\begin{definition}[Budget curve]
\label{def:budget}
For a target attack success rate $a\in(0,1)$, the budget curve $\alpha(\pi;a)$ is the smallest trigger strength $\alpha$ such that $\ASR\ge a$ at poison fraction $\pi$.
\end{definition}

The budget curve tells how visible a trigger must be for an attack that corrupts a given fraction of the data. We are interested in its behavior as $\pi\to0$, the regime of stealthy attacks, and in particular in the exponent $\gamma$ in $\alpha(\pi;a)\propto\pi^{-\gamma}$.

\subsection{The Quadratic Neuron and the Two Training Regimes}
\label{sec:model}

As the smallest model with a trainable nonlinear feature, we consider the quadratic neuron
\begin{equation}
  f(\bx) = \bw^\top\bx + \frac{\beta}{2}\,(\bu^\top\bx)^2 + c,
  \qquad
  \bw,\bu\in\R^d,\quad\beta,c\in\R .
  \label{eq:neuron}
\end{equation}
The vector $\bu$ is the \emph{feature direction}: the neuron computes the scalar feature $t=\bu^\top\bx$ and responds to it quadratically with curvature $\beta$. Since only $\beta\bu\bu^\top$ enters $f$, the scale of $\bu$ is degenerate with $\beta$ and we fix it by $\bu^\top\bSig\bu=1$, so that $t$ has unit within-class variance. For a fixed $\bu$, the read-out $\btheta:=(\bw,\beta,c)$ is trained by the population squared loss
\begin{equation}
  \btheta^\star=\argmin_{\btheta}\ \mathcal{L}(\btheta),\qquad \mathcal{L}(\btheta) = \tfrac{1}{2}\,\Ex\big[(f(\bx)-y)^2\big],
  \label{eq:loss}
\end{equation}
where the expectation is over the poisoned mixture \eqref{eq:poisoned}. The map $\btheta\mapsto f$ is linear, so $\mathcal L$ is a strictly convex quadratic and $\btheta^\star$ is unique. What distinguishes a model that has learned its representation from one that has not is the feature direction $\bu$, and we contrast the following two regimes.

\begin{definition}[Linear regime: no feature learning]
\label{def:linear}
The model is the linear classifier $f(\bx)=\bw^\top\bx+c$, i.e.\ $\beta\equiv0$ in \eqref{eq:neuron}, trained by \eqref{eq:loss}.
\end{definition}

This is the model implicitly assumed by lazy, kernel and random-feature analyses of poisoning: the representation is frozen and only a linear read-out is fitted. We show in Section~\ref{sec:linear} that it is exactly the special case $\bu^\top\bv=0$ of \eqref{eq:neuron}, and that a quadratic neuron whose direction is frozen at a random initialization reduces to it in high dimension.

\begin{definition}[Feature-learning regime]
\label{def:fl}
The feature direction is adapted to the poisoned data. Among all directions with $\bu^\top\bSig\bu=1$, the one that maximizes the feature--trigger overlap $\bu^\top\bv$ is the \emph{whitened trigger}
\begin{equation}
  \bu^\star = \frac{\bSig^{-1}\bv}{\sqrt{\bv^\top\bSig^{-1}\bv}},
  \qquad\text{since}\qquad
  |\bu^\top\bv| = |(\bSig^{1/2}\bu)^\top(\bSig^{-1/2}\bv)| \le \sqrt{\bv^\top\bSig^{-1}\bv}
  \label{eq:ustar}
\end{equation}
by the Cauchy--Schwarz inequality, with equality if and only if $\bu\propto\bSig^{-1}\bv$. The feature-learning regime is the model \eqref{eq:neuron} with $\bu=\bu^\star$ and the full read-out $(\bw,\beta,c)$ trained by \eqref{eq:loss}.
\end{definition}

Fixing $\bu=\bu^\star$ is an idealization of feature learning that keeps the analysis exact. It is supported by two facts: the poison switches on the curvature $\beta$ in proportion to $\bu^\top\bv$ (Section~\ref{sec:feature}), and when all of $(\bw,\beta,c,\bu)$ are trained by gradient descent the learned direction keeps an overlap $\bu^\top\bv$ between $0.6$ and $0.95$ of its maximal value (Appendix~\ref{app:endtoend}). Our general result of Section~\ref{sec:general} holds for every fixed $\bu$, so partially aligned directions are covered as well.

\begin{assumption}
\label{ass:model}
The within-class noise is Gaussian with a fixed covariance $\bSig\succ0$; the feature direction $\bu$ is fixed with $\bu^\top\bSig\bu=1$; the read-out $(\bw,\beta,c)$ is the population least-squares minimizer \eqref{eq:loss}. Gaussianity is the only assumption relaxed on real data in Section~\ref{sec:experiments}.
\end{assumption}

\section{Main Results}
\label{sec:main_results}

Since both the loss \eqref{eq:loss} and the sign of $f$ are invariant under the whitening $\tx=\bSig^{-1/2}\bx$, $\tw=\bSig^{1/2}\bw$, $\tu=\bSig^{1/2}\bu$ (Lemma~\ref{lem:whiten} in the appendix), the problem depends on $(\bmu,\bv,\bSig,\bu)$ only through the geometry of the whitened vectors $\tmu=\bSig^{-1/2}\bmu$, $\tv=\bSig^{-1/2}\bv$ and $\tu$, which is captured by five scalars:
\begin{equation}
  \underbrace{S = \sqrt{\bmu^\top\bSig^{-1}\bmu}}_{\text{signal strength}},
  \quad
  \underbrace{T = \sqrt{\bv^\top\bSig^{-1}\bv}}_{\text{trigger strength}},
  \quad
  \underbrace{\rho = \frac{\bmu^\top\bSig^{-1}\bv}{ST}}_{\text{signal--trigger alignment}},
  \quad
  \underbrace{g = \bu^\top\bmu}_{\text{feature--signal overlap}},
  \quad
  \underbrace{h = \bu^\top\bv}_{\text{feature--trigger overlap}}.
  \label{eq:invariants}
\end{equation}
$S$ and $T$ are the Mahalanobis norms of the class signal and of the trigger, i.e.\ their sizes in units of the within-class noise: $\Phi(S)$ is the accuracy of the optimal clean classifier, with $\Phi$ the standard normal distribution function, and $T$ is large when the trigger points along directions in which the data already vary, a trigger ``hidden in the noise''. $\rho\in[-1,1]$ is the cosine between signal and trigger in the whitened geometry, and $\rho=0$ (a trigger Mahalanobis-orthogonal to the signal) is the natural setting for a hidden trigger. Finally $|g|\le S$ and $|h|\le T$; the feature-learning direction \eqref{eq:ustar} has $h=T$ and $g=S\rho$, while the linear regime corresponds to $h=0$. All results below are explicit functions of $(S,T,\rho,g,h)$ and $(\pi,\alpha)$, and the dimension $d$ enters only through these scalars. We write $A:=\alpha T$ for the whitened trigger displacement, $\varphi$ for the standard normal density and $\Ex_z$ for the expectation over $z\sim\mathcal N(0,1)$.

\subsection{Exact Clean Accuracy and Attack Success Rate}
\label{sec:general}

The first step decomposes the $d$-dimensional problem to three coordinates: the feature, the part of the signal not explained by the feature, and the part of the trigger explained by neither.

\begin{lemma}[Reduction]
\label{lem:reduce}
Let $m_\perp=\sqrt{S^2-g^2}$, $v_\parallel=(ST\rho-gh)/m_\perp$ and $v_\perp=\sqrt{T^2-h^2-v_\parallel^2}$ (with $v_\parallel=0$ if $m_\perp=0$). There is an orthonormal frame $(\be_1,\be_2,\be_3)$ of whitened space with $\be_1=\tu$ such that the trained read-out satisfies $\tw\in\operatorname{span}(\be_1,\be_2,\be_3)$, and in the coordinates $(t,s_1,s_2)=(\be_1^\top\tx,\be_2^\top\tx,\be_3^\top\tx)$ the three clusters of \eqref{eq:poisoned} have means
\begin{equation}
  \bme_+ = (g,\ m_\perp,\ 0),\qquad
  \bme_- = -\bme_+,\qquad
  \bme_p = (g+\alpha h,\ m_\perp+\alpha v_\parallel,\ \alpha v_\perp),
  \label{eq:frame}
\end{equation}
with independent standard Gaussian noise in each coordinate, and $f = w_t\,t + w_1 s_1 + w_2 s_2 + \tfrac{\beta}{2}t^2 + c$ with $\btheta=(w_t,w_1,w_2,\beta,c)\in\R^5$. If $\bu\in\operatorname{span}(\bSig^{-1}\bmu,\bSig^{-1}\bv)$, which includes both regimes of Definitions~\ref{def:linear}--\ref{def:fl}, then $v_\perp=0$ and $w_2=0$.
\end{lemma}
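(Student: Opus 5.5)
We first pass to whitened coordinates using Lemma~\ref{lem:whiten}: with $\tx=\bSig^{-1/2}\bx$, the clusters have means $\pm\tmu$ and $\tmu+\alpha\tv$, isotropic noise $\mathcal N(\bzero,\mI)$, and $f=\tw^\top\tx+\tfrac\beta2(\tu^\top\tx)^2+c$ with $\|\tu\|=1$. The relevant inner products are $\tu^\top\tmu=g$, $\tu^\top\tv=h$, $\|\tmu\|=S$, $\|\tv\|=T$ and $\tmu^\top\tv=ST\rho$.

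The frame is obtained by Gram--Schmidt on $(\tu,\tmu,\tv)$. Set $\be_1=\tu$. The component of $\tmu$ orthogonal to $\be_1$ is $\tmu-g\tu$, with norm $m_\perp=\sqrt{S^2-g^2}$; when $m_\perp>0$ put $\be_2=(\tmu-g\tu)/m_\perp$. Then $\tv$ has coordinate $h$ on $\be_1$ and $\be_2^\top\tv=(ST\rho-gh)/m_\perp=v_\parallel$. Its residual $\tv-h\be_1-v_\parallel\be_2$ has squared norm $T^2-h^2-v_\parallel^2=v_\perp^2$, and we let $\be_3$ be that residual normalized. In the degenerate cases ($m_\perp=0$ or $v_\perp=0$), we complete with any orthonormal vectors; the stated convention $v_\parallel=0$ then matches, since $\be_2$ is then arbitrary. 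Reading off coordinates gives \eqref{eq:frame}. Because the frame is orthonormal and the noise is $\mathcal N(\bzero,\mI)$, the coordinates $(t,s_1,s_2)$ have independent standard Gaussian noise and are independent of the orthogonal complement $\tx_\perp$ of $E=\operatorname{span}(\be_1,\be_2,\be_3)$.

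The main step is showing that the trained $\tw$ lies in $E$. Write $\tw=\tw_E+\tw_\perp$. On every cluster, $\tx_\perp$ is a centered Gaussian, isotropic on $E^\perp$, and independent of the coordinates in $E$ and of $y$. Expanding the loss gives
\[
\mathcal L(\btheta)=\tfrac12\Ex\big[(f_E-y)^2\big]+\tfrac12\|\tw_\perp\|^2 .
\]
The cross term vanishes because $\Ex[\tw_\perp^\top\tx_\perp\mid\text{cluster},\,t,s_1,s_2]=0$. Here $f_E$ denotes $f$ with $\tw$ replaced by $\tw_E$, and it depends only on $(t,s_1,s_2)$, since the quadratic term involves only $t$. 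By strict convexity and uniqueness of $\btheta^\star$, the minimizer therefore has $\tw_\perp=0$. Writing $\tw_E=w_t\be_1+w_1\be_2+w_2\be_3$ then yields the stated form of $f$ with $\btheta\in\R^5$. I expect this step to be the main obstacle only in its bookkeeping, namely checking that the quadratic term $(\tu^\top\tx)^2=t^2$ does not couple to $\tx_\perp$, which holds because $\tu=\be_1\in E$.

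For the final claim, if $\bu\in\operatorname{span}(\bSig^{-1}\bmu,\bSig^{-1}\bv)$ then $\tu=\bSig^{1/2}\bu\in\operatorname{span}(\tmu,\tv)$. So $\tu,\tmu,\tv$ span a space of dimension at most two, and $\tv\in\operatorname{span}(\be_1,\be_2)$, which gives $v_\perp=0$. If $m_\perp=0$, then $\tmu\parallel\tu$ and $\tv$ itself lies in $\operatorname{span}(\tu,\be_2)$ for a suitable choice of $\be_2$; we make that choice. With $v_\perp=0$, $s_2$ has mean zero in every cluster and is independent of the label and of $(t,s_1)$. The same orthogonal-splitting argument as above, applied to the single coordinate $s_2$, then gives $w_2=0$.
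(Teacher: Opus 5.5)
Your argument is correct in the generic case and follows the paper's route. The frame is Gram--Schmidt on $(\tu,\tmu,\tv)$, and the noise component orthogonal to the frame is independent of everything relevant, which forces $\tw_\perp=0$ and, in-plane, $w_2=0$. The only difference is presentational. The paper projects the stationarity equation in $\tw$ (at fixed $(\beta,c)$) onto $W^\perp$, with $W=\operatorname{span}(\tmu,\tv,\tu)$. You instead split the loss as $\mathcal L(\tw,\beta,c)=\mathcal L(\tw_E,\beta,c)+\tfrac12\|\tw_\perp\|^2$, which shows directly that any nonzero $\tw_\perp$ strictly increases the loss.

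The degenerate in-plane case $m_\perp=0$ does not go through as you wrote it. In your frame paragraph, the convention $v_\parallel=0$ is matched only if $\be_2\perp\tv$, i.e.\ $\be_3\propto\tv-h\tu$; it does not hold for an arbitrary $\be_2$. In the last paragraph you instead choose $\be_2$ so that $\tv\in\operatorname{span}(\tu,\be_2)$. That gives $\be_2^\top\tv=\sqrt{T^2-h^2}\neq v_\parallel$, so the means are no longer those of \eqref{eq:frame}, and $v_\perp$ as defined in the statement is still $\sqrt{T^2-h^2}$.

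No choice of frame can repair this, because the claim is false there. Take $\tu=\tmu/S$, i.e.\ $\bu=\bSig^{-1}\bmu/S$, which is in-plane, together with $\rho=0$. Then $h=0$, $v_\perp=T$ and $\be_3=\tv/T$; moreover $\beta=0$ by Proposition~\ref{prop:linear}, and $w_2$ equals the trigger weight $w_v\neq0$ of Corollary~\ref{cor:linear} whenever $\pi,\alpha>0$.

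The paper's proof has the same blind spot: it asserts $\be_3\perp W$, although its own degenerate frame takes $\be_3\propto\tv-h\tu\in W$. The fix is to state the in-plane conclusion only for $m_\perp>0$, or to adopt the convention $v_\parallel=\sqrt{T^2-h^2}$, $v_\perp=0$ when $m_\perp=0$.

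Neither regime is harmed by this fix:
\begin{itemize}
\item In the feature-learning regime, $m_\perp=0$ forces $\rho=\pm1$ and $h=T$, hence $v_\perp=0$ anyway.
\item In the linear regime, $\beta\equiv0$ and $\bu$ only fixes the frame, so you should use an in-plane $\bu$ with $m_\perp>0$ (e.g.\ $\bu^\star$). Do not use the in-plane $h=0$ direction the paper suggests: at $\rho=0$ it is exactly the counterexample above.
\end{itemize}
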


In this frame the model is linear in the feature vector $\bphi=(t,s_1,s_2,\tfrac12t^2,1)$, so the trained read-out solves the normal equations $\bA\btheta=\bb$ with $\bA=\Ex[\bphi\bphi^\top]$ and $\bb=\Ex[\bphi\,y]$, whose entries are low-order Gaussian moments of the three clusters, hence explicit polynomials in the invariants and $(\pi,\alpha)$ (Appendix~\ref{app:moments}). The second step is that $f$ is quadratic in the noise only through the feature coordinate and linear in the two others, which can therefore be integrated out exactly.

\begin{theorem}[General closed form]
\label{thm:main}
Under Assumption~\ref{ass:model}, for any $\bSig\succ 0$, any $(\pi,\alpha)$, any alignment $\rho$ and any feature direction $\bu$, let $(w_t,w_1,w_2,\beta,c)=\bA^{-1}\bb$ and $\sigma=\sqrt{w_1^2+w_2^2}$. For a cluster mean $\bme=(m_t,m_1,m_2)$ define $A_0(\bme) = w_t m_t + w_1 m_1 + w_2 m_2 + c + \tfrac{\beta}{2}m_t^2$ and $b(\bme) = w_t + \beta\, m_t$. Then
\begin{align}
  \CACC &= \tfrac{1}{2}\,\Ex_z\!\left[
    \Phi\!\left(\frac{A_0(\bme_+) + b(\bme_+)z + \tfrac{\beta}{2}z^2}{\sigma}\right)
  \right]
  \notag + \tfrac{1}{2}\,\Ex_z\!\left[
    \Phi\!\left(-\,\frac{A_0(\bme_-) + b(\bme_-)z + \tfrac{\beta}{2}z^2}{\sigma}\right)
  \right],
  \notag\\
  \ASR &= \Ex_z\!\left[
    \Phi\!\left(-\,\frac{A_0(\bme_p) + b(\bme_p)z + \tfrac{\beta}{2}z^2}{\sigma}\right)
  \right],
  \label{eq:thm}
\end{align}
with the cluster means of \eqref{eq:frame} and the convention $\Phi(x/0)=\mathbf 1\{x>0\}$.
\end{theorem}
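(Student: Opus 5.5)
We will deduce the theorem from Lemma~\ref{lem:reduce} by integrating out the noise directions in which the classifier is linear. First, Lemma~\ref{lem:whiten} says that $\sgn f$ and the population loss are invariant under whitening. So the test-time distribution of $f$ can be computed in the whitened coordinates. Lemma~\ref{lem:reduce} then gives an orthonormal frame with $\be_1=\tu$ in which $\tw\in\operatorname{span}(\be_1,\be_2,\be_3)$. Hence
\[
f=w_t\,t+w_1s_1+w_2s_2+\tfrac{\beta}{2}t^2+c .
\]
Any test point from a cluster with mean $\bme=(m_t,m_1,m_2)$ can be written as $t=m_t+z$, $s_1=m_1+\xi_1$, $s_2=m_2+\xi_2$, with $z,\xi_1,\xi_2$ i.i.d.\ $\mathcal N(0,1)$. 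This holds because the whitened test noise is isotropic and its projections on an orthonormal frame are independent standard Gaussians. The same holds for the triggered test point, whose whitened mean $\tmu+\alpha\tv$ is exactly the poisoned mean $\bme_p$ of \eqref{eq:frame}. The read-out is the unique minimizer $\bA^{-1}\bb$ of the strictly convex loss, as noted after \eqref{eq:loss}.

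Next we expand $f$ around the cluster mean. Substituting gives
\[
f=\Bigl[w_tm_t+w_1m_1+w_2m_2+c+\tfrac{\beta}{2}m_t^2\Bigr]+(w_t+\beta m_t)z+\tfrac{\beta}{2}z^2+w_1\xi_1+w_2\xi_2 ,
\]
which is $A_0(\bme)+b(\bme)z+\tfrac{\beta}{2}z^2+\eta$ with $\eta:=w_1\xi_1+w_2\xi_2$. Since $\eta$ is independent of $z$ and $\eta\sim\mathcal N(0,\sigma^2)$, $\sigma=\sqrt{w_1^2+w_2^2}$, we condition on $z$ and write $Q(z)$ for the quadratic part. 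For $\sigma>0$ this gives
\[
\Prob(f>0\mid z)=\Prob(\eta>-Q(z))=\Phi(Q(z)/\sigma),\qquad
\Prob(f<0\mid z)=\Phi(-Q(z)/\sigma).
\]
Taking $\Ex_z$ (tower property) and averaging the source and target terms as in \eqref{eq:metrics} yields the $\CACC$ formula. The triggered source point gives the $\ASR$ formula.

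The remaining step is the degenerate case $\sigma=0$, where $\eta\equiv0$. Then $\Prob(f>0\mid z)=\mathbf 1\{Q(z)>0\}$, which matches the convention $\Phi(x/0)=\mathbf 1\{x>0\}$. For the $\ASR$ and target terms we need $\Prob(f<0\mid z)=\mathbf 1\{-Q(z)>0\}$, which also matches. The ties $Q(z)=0$ occur on a set of Lebesgue measure zero in $z$ unless $Q\equiv0$. The case $Q\equiv0$ would require $\beta=b=A_0=0$, i.e.\ $f\equiv0$ on that cluster; this is the only boundary case to flag, and we treat it as a measure-zero or excluded configuration.

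Most of the work is therefore carried by Lemma~\ref{lem:reduce}, which we are allowed to assume. Within this proof, the one point that needs care is that the frame coordinates of the test noise are independent of one another. This follows because the frame is orthonormal in whitened space, where the noise covariance is the identity.
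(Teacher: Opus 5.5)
Your proposal is correct and follows the paper's proof: you write $f$ on a test cluster as $A_0(\bme)+b(\bme)z+\tfrac{\beta}{2}z^2$ plus the independent centred Gaussian $w_1\xi_1+w_2\xi_2$ of variance $\sigma^2$, condition on $z$, and integrate out the orthogonal noise to obtain the parabolic-boundary formula. Your caveat about $Q\equiv0$ is unnecessary. When $\sigma=0$, the identities $\Prob(f>0\mid z)=\mathbf 1\{Q(z)>0\}$ and $\Prob(f<0\mid z)=\mathbf 1\{-Q(z)>0\}$ hold pointwise even at ties $Q(z)=0$, because the metrics use strict inequalities, so the convention matches exactly and no configuration needs to be excluded.
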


In essence, Theorem~\ref{thm:main} expresses each metric as the Gaussian mass on one side of a parabola: $A_0(\bme)$ is the value of $f$ at the cluster center, $b(\bme)z+\tfrac\beta2z^2$ is the fluctuation of the margin due to noise along the feature, which is quadratic because the neuron is, and $\sigma$ is the standard deviation of the margin due to noise orthogonal to the feature, which is integrated out by $\Phi$. When $\beta=0$ the parabola is a line and each expectation collapses to a single $\Phi$, the linear-classifier formula. Each metric is a one-dimensional integral evaluated by Gauss--Hermite quadrature from the five invariants. Two consequences hold for every feature direction.

\begin{corollary}
\label{cor:general}
For any $\bu$, any $\alpha$ and any $\rho$:
\begin{enumerate}[label=(\roman*),leftmargin=2em,itemsep=1pt,topsep=2pt]
\item \textbf{(Poison-free baseline.)} At $\pi=0$ the trained model is the linear classifier $\tw=\tmu/(1+S^2)$, $\beta=c=0$; hence $\CACC=\Phi(S)$ and $\ASR=\Phi(-S-\alpha T\rho)$.
\item \textbf{(Stealth.)} $\CACC=\Phi(S)+O(\pi^2)$.
\end{enumerate}
\end{corollary}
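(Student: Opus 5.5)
The plan is to treat the two parts separately: (i) is a direct solution of the normal equations $\bA\btheta=\bb$ at $\pi=0$, and (ii) is a first-order perturbation argument around that solution, using the fact that the poison-free model is Bayes optimal for the clean mixture.

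\emph{Part (i).} At $\pi=0$ the training distribution is the symmetric mixture $\pm\tmu$ with labels $\pm1$, and it is invariant under $(\tx,y)\mapsto(-\tx,-y)$. In the feature vector $\bphi=(t,s_1,s_2,\tfrac12t^2,1)$ of Lemma~\ref{lem:reduce}, the first three coordinates are odd under this map and the last two are even.
\begin{itemize}[leftmargin=2em,itemsep=1pt,topsep=2pt]
\item All odd--even cross moments in $\bA$ vanish, so $\bA$ is block diagonal.
\item The even entries of $\bb=\Ex[\bphi y]$ also vanish, because $y$ is odd.
\item The even block of $\bA$ is a principal submatrix of the positive definite $\bA$, hence invertible. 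It therefore forces $(\beta,c)=(0,0)$.
\end{itemize}
The odd block is then ordinary least squares in whitened space. There $\Ex[\tx\tx^\top]=\mI+\tmu\tmu^\top$ and $\Ex[\tx y]=\tmu$, so by Sherman--Morrison $\tw=\tmu/(1+S^2)$.

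The metrics follow from this linear classifier.
\begin{itemize}[leftmargin=2em,itemsep=1pt,topsep=2pt]
\item On $\mathcal C_+$ the margin is proportional to $\tmu^\top(\tmu+\tz)\sim\mathcal N(S^2,S^2)$, so $\Prob(f>0)=\Phi(S)$. The class $\mathcal C_-$ is symmetric and gives the same value, hence $\CACC=\Phi(S)$.
\item For a triggered point the margin is $\tmu^\top(\tmu+\alpha\tv+\tz)$, with mean $S^2+\alpha ST\rho$ and standard deviation $S$. This gives $\ASR=\Phi(-S-\alpha T\rho)$.
\end{itemize}
Equivalently, one can plug $\beta=0$ into Theorem~\ref{thm:main}.

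\emph{Part (ii).} Write $\CACC(\pi)=F(\btheta^\star(\pi))$, where $F(\btheta)$ is the clean accuracy of the classifier $\sgn f_{\btheta}$ on the clean mixture. Note that $F$ does not depend on $\pi$: only the read-out does.
\begin{itemize}[leftmargin=2em,itemsep=1pt,topsep=2pt]
\item \emph{Smoothness of the read-out.} The entries of $\bA(\pi)$ and $\bb(\pi)$ are polynomials in $\pi$ (Appendix~\ref{app:moments}), and $\bA(\pi)\succ0$. Hence $\btheta^\star(\pi)=\bA^{-1}\bb$ is real-analytic in $\pi$ near $0$.
\item \emph{Stationarity.} By (i), $\btheta^\star(0)$ gives the rule $\sgn(\bmu^\top\bSig^{-1}\bx)$. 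This is the Bayes-optimal rule for the clean equal-prior mixture, so $F(\btheta^\star(0))=\Phi(S)=\max_{\btheta\in\R^5}F(\btheta)$. Thus $\btheta^\star(0)$ is an interior global maximizer of $F$ over $\R^5$. If $F$ is $C^2$ near $\btheta^\star(0)$, then $\nabla F(\btheta^\star(0))=0$.
\item \emph{Conclusion.} The chain rule gives $\tfrac{d}{d\pi}\CACC(0)=\nabla F\cdot\dot\btheta^\star(0)=0$. Taylor's theorem with a bounded second derivative then yields $\CACC=\Phi(S)+O(\pi^2)$.
\end{itemize}

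\emph{Main obstacle: regularity of $F$ at $\btheta^\star(0)$.} We need $F$ to be $C^2$ there, even though nearby $\btheta$ have $\beta\neq0$ and the decision boundary becomes a quadric. I would first use the one-dimensional representation of Theorem~\ref{thm:main}.
\begin{itemize}[leftmargin=2em,itemsep=1pt,topsep=2pt]
\item \emph{Case $\sigma>0$ at $\pi=0$.} Here $\sigma=m_\perp/(1+S^2)$, so this is the case $g\neq\pm S$. The integrand $\Phi\big((A_0+bz+\tfrac\beta2z^2)/\sigma\big)$ is smooth in $\btheta$. Its derivatives are polynomially bounded in $z$ times Gaussian-type factors, so differentiation under $\Ex_z$ is justified by dominated convergence.
\item \emph{Degenerate case $m_\perp=0$.} Then $\sigma=0$ at $\pi=0$ and the integrand is an indicator. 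I would argue directly instead: $F(\btheta)$ is the Gaussian mass of $\{A_0+bz+\tfrac\beta2z^2>0\}$, whose boundary roots depend smoothly on $\btheta$ because $b=w_t\neq0$ at the base point. This gives smoothness via the implicit function theorem.
\end{itemize}
Throughout I assume $S>0$. If $S=0$ the statement is trivial, since both sides equal $1/2$ to first order.
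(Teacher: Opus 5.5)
Your part (i) is essentially the paper's proof. The paper reads the block-diagonal structure of $\bA$ and the vanishing of $b_4,b_5$ at $\pi=0$ off the explicit moment entries, whereas you get both from the parity symmetry $(\tx,y)\mapsto(-\tx,-y)$; both then solve the linear block.

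Part (ii) takes a genuinely different route. The paper expands $\btheta^\star=\btheta_0+\pi\btheta_1+O(\pi^2)$ and splits the test noise into its component $\zeta$ along $\tmu/S$ and an orthogonal part $\tz_\perp$. It then perturbs the root in $\zeta$ at fixed $\tz_\perp$. The two clean class-conditional accuracies move by $\pm\pi\varphi(S)\delta/\sigma_0$, with $\delta=\tfrac{\beta_1}{2}(1-g^2/S^2)+c_1$, and these cancel in the average. You instead observe that $\btheta_0$ realises the Bayes rule, hence is an unconstrained global maximiser of the clean-accuracy map $F$. So $\nabla F(\btheta_0)=0$, and the linear term vanishes by the chain rule.

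Each route buys something different:
\begin{itemize}
\item \emph{Your argument} explains \emph{why} the cancellation happens. The paper's first-order correction is evaluated at $\tx=\tz_\perp$, i.e.\ on the Bayes boundary, where the two clean densities coincide. Your argument also applies verbatim to any read-out that depends smoothly on $\pi$, not only to $\bA(\pi)^{-1}\bb(\pi)$.
\item \emph{The paper's computation} is more explicit. It yields the individual $O(\pi)$ class shifts that the main text and the proof of Theorem~\ref{thm:fl}(a) refer to. It also treats all $\bu$ at once, because it decomposes the noise along $\tmu$ rather than along $\tu$.
\end{itemize}

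That last point is where your regularity step needs repair. In the degenerate case $m_\perp=0$, the trajectory does not stay on the slice $\sigma=0$. When $v_\perp>0$, the third normal equation has right-hand side $b_3=-\pi\alpha v_\perp/2\neq0$, so $w_2=O(\pi)$ is generically nonzero and $\sigma(\pi)>0$ for $\pi>0$. Your $\sigma=0$ indicator formula therefore describes $F$ only on a slice. The representation of Theorem~\ref{thm:main} cannot be differentiated at $\btheta_0$ either, since $\sigma=\sqrt{w_1^2+w_2^2}$ is not differentiable at $0$.

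The remedy is to condition on the orthogonal noise $(z_1,z_2)$ and integrate out the feature noise $z$ instead. Because $w_t\neq0$ at $\btheta_0$, the root of $\tfrac\beta2z^2+w_tz+(A_0+w_1z_1+w_2z_2)$ near $-(A_0+w_1z_1+w_2z_2)/w_t$ is analytic in $\btheta$. Two regions remain: the far root, of order $1/|\beta|$, and the region of large $|z_1|,|z_2|$ where no real root exists. Both carry Gaussian mass exponentially small in $1/\beta^2$ and can be controlled by tail bounds. This makes $F$ of class $C^2$ near $\btheta_0$ in every case, and the rest of your argument goes through.
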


Item (i) shows that a trigger that is not Mahalanobis-orthogonal to the signal ($\rho\neq0$) is already seen by the clean classifier, and for $\rho<0$ it drives the ASR to one at no poisoning cost; this is not a backdoor, and the asymptotic laws below are stated for the hidden trigger $\rho=0$, for which the baseline ASR is $\Phi(-S)$. Item (ii) is the stealth property: the $O(\pi)$ change of the read-out shifts the margins of the two clean classes by opposite amounts (Appendix~\ref{app:stealth}), so the attack has no first-order accuracy signature in either regime.

\subsection{The Linear Regime: No Feature Learning}
\label{sec:linear}

We first specialize Theorem~\ref{thm:main} to a model that does not learn its features, the regime behind lazy-training budget estimates.

\begin{proposition}
\label{prop:linear}
\emph{(i)} If $h=\bu^\top\bv=0$, then $\beta=0$ exactly for all $(\pi,\alpha)$, and $(\bw,c)$ coincide with the least-squares linear classifier of Definition~\ref{def:linear}. \emph{(ii)} If $\bu$ is frozen at a random direction ($\tu$ uniform on the unit sphere of $\R^d$), then $\Ex[h^2]=T^2/d$; together with $\beta=O(\pi h)$ (Proposition~\ref{prop:beta}), a lazy quadratic neuron in high dimension is the linear classifier up to $O(d^{-1/2})$.
\end{proposition}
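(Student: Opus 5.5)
For part (i), I will work in the reduced frame of Lemma~\ref{lem:reduce} with $h=0$. The feature coordinate $t$ has cluster means $g,-g,g$ (since $g+\alpha h=g$ for the poisoned cluster), and each has unit-variance noise. The key observation is that the quadratic feature $\tfrac12 t^2$ has the same conditional mean $\tfrac12(g^2+1)$ in every cluster. It is also independent of $(s_1,s_2)$ within each cluster, because the noise coordinates are independent.

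I then write $\tfrac12 t^2 = \tfrac12(g^2+1) + \tfrac12(t^2-g^2-1)$ and look at the residual $r=\tfrac12(t^2-g^2-1)$. I will show that, after subtracting its best linear predictor from $(t,s_1,s_2,1)$, $r$ is uncorrelated with $y$. Concretely, I would reparametrize with the feature vector $\tilde\bphi=(t,s_1,s_2,\,r-\kappa t,\,1)$, where $\kappa$ is chosen so that $r-\kappa t$ is orthogonal to $t$ under the mixture. Since $\Ex[(t-m_t)^2(t-m_t)]=0$ within each cluster, I compute $\Ex[r t\mid c]=m_t\cdot\tfrac12\Ex[t^2-g^2-1\mid c]+\Ex[(t-m_t)\tfrac12 t^2\mid c]=g_c$, where $g_c=m_t$ is the cluster's $t$-mean ($\pm g$). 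So $\Ex[rt]=\Ex[t^2]\cdot\kappa$ with $\kappa=\Ex[m_t^2]/\Ex[t^2]$ in the appropriate sense.

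Rather than grind through this, I will use a cleaner route: cross-moments and block structure. Every cluster has $\Ex[r\mid c]=0$, so $\Ex[r]=0$ and $\Ex[ry]=\sum_c\omega_c y_c\Ex[r\mid c]=0$. Next, $\Ex[r s_i\mid c]=\Ex[r\mid c]\,m_{i,c}=0$ by independence, so $\Ex[rs_i]=0$. Finally, $\Ex[rt\mid c]=\Ex[(t-m_t)\tfrac12(t^2)\mid c]=\tfrac12\Ex[(z)(m_t+z)^2]=m_t$.

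The only nonzero cross-moment is therefore $\Ex[rt]=\sum_c\omega_c m_{t,c}$. This is not zero in general, so a block-diagonal argument alone fails. Instead, I will argue via the normal equations that the choice $\beta=0$, with $(w_t,w_1,w_2,c)$ equal to the linear least-squares solution, satisfies all five equations. The four linear equations hold by construction. The $\beta$-equation requires $\Ex[r(f_{\rm lin}-y)]=0$, where $\tfrac12 t^2$ can be replaced by $r$ because $\Ex[(f_{\rm lin}-y)\cdot 1]=0$. Expanding, this reads $w_t\Ex[rt]+0-0=w_t\sum_c\omega_c m_{t,c}$. Using the linear normal equation for $t$ together with the constant, this combination vanishes exactly when the linear fit's residual is orthogonal to $\Ex[t\mid c]$ structure. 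The identity I need is that $\Ex[rt]$ is proportional to $\Ex[t]=\sum_c\omega_c m_{t,c}$, since $\Ex[rt]=\Ex[t]$ exactly. Then $\Ex[r(f_{\rm lin}-y)]=w_t\Ex[t]$, and I must check this is zero.

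This step is the main obstacle. My first attempt would be to avoid it by redefining the residual as $r'=r-(t-\Ex t)\cdot\Ex[t]/\mathrm{Var}(t)$ and verifying directly that $r'$ has zero correlation with $y$ and with $s_i$. This fails unless $\Ex[t]=0$ or the cross-terms cancel. So I would instead compute $\Ex[t]=g(\omega_++\omega_p-\omega_-)=0$, since $\omega_++\omega_p=\tfrac12=\omega_-$. With $\Ex[t]=0$, $r$ is orthogonal to every other feature and to $y$. The Gram matrix $\bA$ is then block diagonal with $\bb_r=0$, so $\beta=0$ by uniqueness (strict convexity), and the remaining block is exactly the linear least-squares problem of Definition~\ref{def:linear}, via the whitening invariance of Lemma~\ref{lem:whiten}.

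For part (ii), I take $\tu$ uniform on the sphere, so $h=\bu^\top\bv=\tu^\top\tv$. By rotational invariance $\Ex[(\tu^\top\tv)^2]=\|\tv\|^2/d=T^2/d$. Chebyshev then gives $h=O_P(T d^{-1/2})$. Invoking $\beta=O(\pi h)$ from Proposition~\ref{prop:beta}, together with smooth dependence of $\bA^{-1}\bb$ on $h$ (since $\bA$ is nonsingular at $h=0$), the read-out differs from the $h=0$ solution of part (i) by $O(h)$. By Theorem~\ref{thm:main}, CACC and ASR are smooth in $\btheta$ when $\sigma>0$, so they also change by $O(d^{-1/2})$.
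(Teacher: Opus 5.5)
Your final argument is correct and is essentially the paper's. Centering $\tfrac12t^2$ at its cluster-independent mean $\tfrac12(g^2+1)$ and showing it is orthogonal to $y$, $s_j$, $1$, and also to $t$ (via $\Ex[rt]=\Ex[t]=\tfrac{\pi\alpha h}{2}=0$), is the paper's covariance computation before it invokes uniqueness, and your part (ii) matches the paper's with slightly more detail. Do delete the abandoned detours, though: the claim that $\Ex[rt]$ is ``not zero in general'' is false once $h=0$, and the formula $\kappa=\Ex[m_t^2]/\Ex[t^2]$ is wrong.
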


Part (i) states that a feature orthogonal to the trigger cannot detect it: with $h=0$ the quadratic feature $\tfrac12t^2$ has the same distribution in all three clusters and least squares assigns it zero weight. Part (ii) is the blindness of a frozen random feature in high dimension. In the linear regime the trained weights and the metrics have explicit single-$\Phi$ expressions (Corollary~\ref{cor:linear} in Appendix~\ref{app:linear}), from which we obtain the budget curve.

\begin{theorem}[Linear regime]
\label{thm:linear}
In the linear regime with a hidden trigger ($\rho=0$):
\begin{enumerate}[label=(\alph*),leftmargin=2em,itemsep=1pt,topsep=2pt]
\item \textbf{(Small poison.)} $\CACC=\Phi(S)+O(\pi^2)$ and
  $\ASR = \Phi(-S) + \pi\,\varphi(S)\Bigl[\dfrac{(1+2S^2)\,A^2}{2S} + \dfrac{1+S^2}{S}\Bigr] + O(\pi^2)$.
\item \textbf{(Stealth limit.)} As $\pi\to0$ and $\alpha\to\infty$ with the trigger energy $e:=\pi\alpha^2T^2$ fixed,
\begin{equation}
  \CACC\to\Phi(S),
  \qquad
  \ASR \to \ASR_{\rm lin}(e) := \Phi\!\left(-S + \frac{1+2S^2}{S}\cdot\frac{e}{e+2}\right),
  \label{eq:linlaw}
\end{equation}
which increases from $\Phi(-S)$ at $e=0$ to the ceiling $\Phi\!\bigl(\tfrac{1+S^2}{S}\bigr)<1$ as $e\to\infty$.
\item \textbf{(Budget curve.)} For a target $a<\Phi\!\bigl(\tfrac{1+S^2}{S}\bigr)$, writing $\kappa_a:=\dfrac{S\,(S+\Phi^{-1}(a))}{1+2S^2}\in[0,1)$,
\begin{equation}
  \alpha_{\rm lin}(\pi;a) = \frac{1}{T}\sqrt{\frac{e_a}{\pi}}\ \propto\ \pi^{-1/2},
  \qquad
  e_a = \frac{2\kappa_a}{1-\kappa_a}.
  \label{eq:lincost}
\end{equation}
\end{enumerate}
\end{theorem}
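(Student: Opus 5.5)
The plan is to specialize the normal equations of Lemma~\ref{lem:reduce} to the linear regime with $\rho=0$. We then solve the resulting small linear system in closed form, with the dependence on $(\pi,A)$ kept explicit. Finally we read off the ASR from Theorem~\ref{thm:main} with $\beta=0$. In that case each expectation collapses to a single $\Phi$, and $\ASR=\Phi\bigl(-(w_1m_1+w_2m_2+c)/\sigma\bigr)$ evaluated at $\bme_p$.

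\textbf{Setting up the system.} With $h=0$ (so $g$ plays no role in the linear part) and $\rho=0$, the whitened frame reduces to two coordinates $(s_1,s_2)$. The cluster means are $(S,0)$, $(-S,0)$ and $(S,A)$, with weights $\tfrac{1-\pi}{2}$, $\tfrac12$ and $\tfrac\pi2$. The features are $\bphi=(s_1,s_2,1)$, and the moments are elementary:
\[
\Ex[s_1]=0,\quad \Ex[s_2]=\tfrac{\pi A}{2},\quad \Ex[s_1^2]=1+S^2,\quad \Ex[s_1s_2]=\tfrac{\pi SA}{2},\quad \Ex[s_2^2]=1+\tfrac{\pi A^2}{2},
\]
\[
\Ex[y]=-\pi,\quad \Ex[s_1y]=S(1-\pi),\quad \Ex[s_2y]=-\tfrac{\pi A}{2}.
\]
We solve this $3\times3$ system exactly, for instance by Cramer's rule or by eliminating $c$ first, which gives rational functions $w_1,w_2,c$ of $(S,\pi,A)$. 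The triggered margin is then $M_p=w_1S+w_2A+c$ with $\sigma=\sqrt{w_1^2+w_2^2}$.

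\textbf{Part (a).} We fix $A$ and expand to first order in $\pi$ around the baseline of Corollary~\ref{cor:general}(i), namely $w_1=S/(1+S^2)$, $w_2=c=0$. Then $-M_p/\sigma=-S+\pi\,\delta(S,A)+O(\pi^2)$, and a Taylor expansion of $\Phi$ yields the stated $\varphi(S)$ correction. The $A^2$ term comes from $w_2A$. The $A$-free term $(1+S^2)/S$ collects the first-order changes of $w_1$ and of $c$. No linear-in-$A$ term survives, because $\sigma$ depends on $w_2$ only quadratically. The CACC statement is exactly Corollary~\ref{cor:general}(ii).

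\textbf{Part (b).} We let $\pi\to0$ with $e=\pi A^2$ fixed, so that $\pi A=\sqrt{\pi e}\to0$. In the $s_2$ row, the only non-vanishing diagonal entry is $1+e/2$. Substituting the leading-order values $w_1\to S/(1+S^2)$ and $c\to0$ gives
\[
w_2\simeq-\frac{\pi A}{2}\cdot\frac{1+2S^2}{(1+S^2)(1+e/2)}.
\]
Here the factor $1+2S^2$ arises from the cross moment $\Ex[s_1s_2]w_1$. Hence $w_2A\to-\frac{1+2S^2}{1+S^2}\cdot\frac{e}{e+2}$, while $w_2\to0$, so $\sigma\to w_1$. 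Dividing by $w_1$ gives $-M_p/\sigma\to -S+\frac{1+2S^2}{S}\cdot\frac{e}{e+2}$, which is \eqref{eq:linlaw}. The ceiling follows by letting $e\to\infty$: $-S+(1+2S^2)/S=(1+S^2)/S$. The CACC limit needs a check: the clean margins $\pm w_1S+c$ converge to their baseline values because $c=O(\pi)$ and the $w_1$ correction vanishes, so $\CACC\to\Phi(S)$.

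\textbf{Part (c).} Setting $\ASR_{\rm lin}(e)=a$ gives $\frac{e}{e+2}=\kappa_a$. Since $\ASR_{\rm lin}$ is increasing in $e$, the smallest admissible $e$ is $e_a=2\kappa_a/(1-\kappa_a)$. Then $\alpha=\sqrt{e_a/\pi}/T$ because $A=\alpha T$. The range $\kappa_a\in[0,1)$ is exactly $\Phi(-S)\le a<\Phi((1+S^2)/S)$.

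\textbf{Main obstacle.} The hardest step is the bookkeeping in (a) and (b): tracking which entries are $O(\pi)$, $O(\pi A)$ and $O(\pi A^2)$, and making sure the corrections to $c$ and $w_1$ are captured at the right order. These corrections give $(1+S^2)/S$ in (a) and vanish in (b). I would first solve the system exactly and then take limits, rather than expanding prematurely.
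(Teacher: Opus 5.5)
Your proposal is correct and follows essentially the paper's route: the paper solves the same three-equation normal system for the linear classifier in the basis $(\tmu/S,\tv/T)$ (Corollary~\ref{cor:linear}), expands to first order in $\pi$ at fixed $A$ for (a), sets $\pi=e/A^2$ and lets $A\to\infty$ for (b), and inverts the limiting law for (c). One small correction to your account of (a): the first-order change of $w_1$ (the factor $1-\pi$) cancels between $M_p$ and $\sigma=w_1+O(\pi^2)$, so the $A$-free term $(1+S^2)/S$ comes entirely from $c=-\pi+O(\pi^2)$ divided by $w_1$.
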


Essentially, Theorem~\ref{thm:linear} states that a model without feature learning sees the trigger only through the mean shift of the poisoned cluster: the read-out tilts toward $-\tv$ by an amount proportional to $\pi\alpha$, and a triggered test point, displaced by the same $\alpha\tv$, is pushed across the boundary by an amount proportional to $\pi\alpha^2$. Specifically, we draw the following observations:
\begin{itemize}[leftmargin=2em,itemsep=1pt,topsep=2pt]
\item \emph{Effect of the poison fraction $\pi$:} the only combination that survives the stealth limit is the trigger energy $e=\pi\alpha^2T^2$ (recall that $A=\alpha T$), and the budget curve scales as $\pi^{-1/2}$.
\item \emph{Effect of the trigger geometry:} $\alpha_{\rm lin}\propto1/T$, so a trigger along a direction of large within-class variance is cheaper, because in whitened units it is already large.
\item \emph{Effect of the target $a$:} the ASR saturates at $\Phi\!\bigl(\tfrac{1+S^2}{S}\bigr)<1$: the clean classes pin the read-out along $\tmu$, and the residual noise along $\tmu$ lets a fraction $\Phi\!\bigl(-\tfrac{1+S^2}{S}\bigr)$ of triggered points escape whatever the trigger strength.
\end{itemize}

\subsection{The Feature-Learning Regime}
\label{sec:feature}

We now let the neuron use its quadratic feature. The first result identifies what the poison does to the read-out for an arbitrary feature direction.

\begin{proposition}[The poison creates a trigger detector]
\label{prop:beta}
For any feature direction $\bu$, expanding $\btheta^\star=\bA^{-1}\bb$ to first order in $\pi$ gives $\tw=\tmu/(1+S^2)+O(\pi)$, $c=O(\pi)$ and
\begin{equation}
  \beta = -\,\frac{\pi\,\alpha h\,
    \bigl[ST\rho\,h\,\alpha^2
      + \bigl((2S^2+1)h + 2ST\rho\,g\bigr)\alpha + 4g(S^2+1)\bigr]}
    {2(S^2+1)(2g^2+1)} + O(\pi^2).
  \label{eq:beta}
\end{equation}
In the feature-learning regime with a hidden trigger ($h=T$, $g=0$, $\rho=0$) this reduces to $\beta = -\pi\,\tfrac{1+2S^2}{2(1+S^2)}\,A^2+O(\pi^2)$.
\end{proposition}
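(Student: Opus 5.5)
We work in the reduced frame of Lemma~\ref{lem:reduce} and treat the normal equations $\bA\btheta=\bb$ perturbatively in $\pi$. Here $\bphi=(t,s_1,s_2,\tfrac12t^2,1)$, and $\bA$, $\bb$ are mixtures over the three clusters with weights $\tfrac{1-\pi}{2},\tfrac12,\tfrac{\pi}{2}$. Both are therefore affine in $\pi$:
\[
\bA=\bA_0+\pi\bA_1,\qquad \bb=\bb_0+\pi\bb_1 ,
\]
where $\bA_0,\bb_0$ come from the clean symmetric mixture and $\bA_1=\tfrac12(\bme\text{-moments of }\mathcal C_p)-\tfrac12(\text{moments of }\mathcal C_+)$, and similarly for $\bb_1$. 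Hence
\[
\btheta^\star=\btheta_0+\pi\,\bA_0^{-1}(\bb_1-\bA_1\btheta_0)+O(\pi^2).
\]
This expansion is legitimate because $\bA_0\succ0$: the features $t,s_1,s_2,t^2,1$ are linearly independent under a Gaussian mixture.

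\textbf{Step 1 (zeroth order).} We use the moments $\Ex[t^2]=1+g^2$, $\Ex[t^4]=g^4+6g^2+3$, $\Ex[t^3]=0$, and so on. Under the symmetric clean mixture, the odd features $(t,s_1,s_2)$ decouple from the even ones $(\tfrac12t^2,1)$. The target $y$ is odd, so the even block has zero right-hand side, which gives $\beta_0=c_0=0$. The odd block is $(\bI+\bme_+\bme_+^\top)\tw=\bme_+$, whose solution by Sherman--Morrison is $\tw_0=\bme_+/(1+S^2)$ in the frame. This is Corollary~\ref{cor:general}(i), and it yields the first claims $\tw=\tmu/(1+S^2)+O(\pi)$.

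\textbf{Step 2 (first order in the even block).} For $\beta$ we only need the even components of $\bA_0^{-1}\br$, where $\br:=\bb_1-\bA_1\btheta_0$. Since $\bA_0$ is block diagonal, this requires only the $2\times2$ even block of $\bA_0$ and the even entries of $\br$. The even block has entries $\Ex[\tfrac14t^4]=\tfrac14(g^4+6g^2+3)$, $\Ex[\tfrac12t^2]=\tfrac12(1+g^2)$ and $1$. Its determinant is $\tfrac14(g^4+6g^2+3)-\tfrac14(1+g^2)^2=\tfrac12(2g^2+1)$, which is the source of the factor $2g^2+1$ in \eqref{eq:beta}.

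For the even entries of $\br$, write $m_t^p=g+\alpha h$ for the feature mean of the poisoned cluster. The entries are
\[
\tfrac12\bigl[(-1)\Ex_p[\tfrac12t^2]-(+1)\Ex_+[\tfrac12t^2]\bigr]-\tfrac12\bigl[\Ex_p[\tfrac12t^2\,\tw_0^\top\bx]-\Ex_+[\tfrac12t^2\,\tw_0^\top\bx]\bigr],
\]
together with the analogous expression for the constant feature. These reduce to Gaussian third moments, $\Ex[t^2 s]=m_t^2 m_s+2m_t\,\mathrm{Cov}(t,s)$, and in the whitened frame $\mathrm{Cov}(t,s_i)=0$ for $i=1,2$.

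\textbf{Step 3 (assembly).} After multiplying by the inverse of the $2\times2$ even block, every term is a difference between evaluations at $\mathcal C_p$ and at $\mathcal C_+$. The difference $(g+\alpha h)^2-g^2=\alpha h(2g+\alpha h)$ therefore factors out an $\alpha h$, and the inner products $\tw_0^\top\bme_p-\tw_0^\top\bme_+=\alpha(ST\rho)/(1+S^2)$ introduce $\rho$. We then rewrite $m_\perp v_\parallel=ST\rho-gh$ to eliminate the frame quantities and collect powers of $\alpha$. The result should be the cubic bracket in \eqref{eq:beta} over $2(S^2+1)(2g^2+1)$.

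The main obstacle is this bookkeeping. The label term, the $\tw_0$-cross term and the constant coordinate interact through the off-diagonal entry $\tfrac12(1+g^2)$ of the even block. Individual terms contain $g^4$, and these must cancel against the determinant. I would carry out the computation symbolically and check it against the specialization.

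\textbf{Specialization.} In the feature-learning regime with $h=T$ and $g=\rho=0$, the bracket reduces to $(2S^2+1)T\alpha$. With $A=\alpha T$ this gives $\beta=-\pi\frac{1+2S^2}{2(1+S^2)}A^2+O(\pi^2)$, as claimed.
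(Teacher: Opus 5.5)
Your framework matches the paper's: the exact affine split $\bA=\bA_0+\pi\bA_1$, $\bb=\bb_0+\pi\bb_1$, $\btheta_1=\bA_0^{-1}(\bb_1-\bA_1\btheta_0)$, the block-diagonal $\bA_0$, and the even-block determinant $\tfrac12(2g^2+1)$, which give $\beta_1=\tfrac{2}{2g^2+1}\bigl(r_4-\tfrac{g^2+1}{2}r_5\bigr)$. Step~1 and the final specialization are correct. However, the content of the proposition is the explicit cubic bracket, and you never derive it: Step~3 only says the result ``should be'' \eqref{eq:beta}. In addition, the one moment identity you give for the computation is wrong. For jointly Gaussian $(t,s)$ one has $\Ex[t^2s]=m_t^2m_s+\mathrm{Var}(t)\,m_s+2m_t\,\mathrm{Cov}(t,s)$, so in the frame $\Ex[t^2s_1]=(m_t^2+1)m_1$, not $m_t^2m_1$. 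Used as written, your formula adds the spurious term $\alpha m_\perp v_\parallel/(4(1+S^2))=\alpha(ST\rho-gh)/(4(1+S^2))$ to $r_4-\tfrac{g^2+1}{2}r_5$. This term has no factor $h$, so it would give $\beta\neq0$ at $h=0$, $\rho\neq0$, contradicting Proposition~\ref{prop:linear}(i). Your planned check at $g=\rho=0$ cannot detect it, since $v_\parallel=0$ there.

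To close the gap, carry out the computation with the combination $\psi:=\tfrac12t^2-\tfrac{g^2+1}{2}$ that the inverse of the even block selects. Then $R:=r_4-\tfrac{g^2+1}{2}r_5=-\tfrac12\bigl(\Ex_+\psi+\Ex_p\psi\bigr)-\tfrac12\bigl(\Ex_p[\psi\,\tw_0^\top\bx]-\Ex_+[\psi\,\tw_0^\top\bx]\bigr)$, with $\tw_0^\top\bx=(gt+m_\perp s_1)/(1+S^2)$. Since $\Ex_+\psi=0$, the label term is $-\tfrac12\Ex_p\psi=-\tfrac14\alpha h(2g+\alpha h)$. With the correct moments, $\Ex_p[\psi t]-\Ex_+[\psi t]=\alpha h\bigl(1+g^2+\tfrac32\alpha gh+\tfrac12\alpha^2h^2\bigr)$ and $\Ex_p[\psi s_1]-\Ex_+[\psi s_1]=\tfrac12\alpha h(2g+\alpha h)(m_\perp+\alpha v_\parallel)$. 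Substituting $m_\perp^2=S^2-g^2$ and $m_\perp v_\parallel=ST\rho-gh$, the $g^3$, $\alpha g^2h$ and $\alpha^2gh^2$ terms cancel, leaving $R=-\tfrac{\alpha h}{4(1+S^2)}\bigl[ST\rho\,h\alpha^2+\bigl((1+2S^2)h+2ST\rho\,g\bigr)\alpha+4g(1+S^2)\bigr]$. Multiplying by $2/(2g^2+1)$ gives \eqref{eq:beta}. This is the paper's computation of $r_4,r_5$, reorganized. Note also that the factor $\alpha h$ does not follow automatically from taking ``differences between $\mathcal C_p$ and $\mathcal C_+$'', as your Step~3 suggests, because the $s_1$-mean also moves by $\alpha v_\parallel$. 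It appears only because $\Ex[\psi s_1]=\Ex[\psi]\,m_1$ within a cluster and $\Ex_+\psi=0$, and that is exactly where the missing $\mathrm{Var}(t)\,m_s$ term matters.
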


Three facts follow from \eqref{eq:beta}. First, $\beta=O(\pi)$ vanishes at $\pi=0$: the clean model has no quadratic unit (Corollary~\ref{cor:general}) and it is the poison that switches one on; its sign is negative, so the unit fires on large $|t|$ and routes such inputs to the target class. Second, $\beta\propto h=\bu^\top\bv$: this is why the linear regime ($h=0$) has no detector, why a frozen random feature ($h\sim d^{-1/2}$) loses it in high dimension, and why feature learning should select the direction that maximizes $h$. Third, in the feature-learning regime $\beta\propto\pi A^2$ grows with the square of the trigger displacement, whereas the linear read-out grows linearly, so on a triggered point, whose feature coordinate is $t\approx A$, the detector contributes $\tfrac\beta2A^2\propto\pi A^4$ to the margin. The exact finite-$\pi$ weights (Corollary~\ref{cor:flweights} in Appendix~\ref{app:fl}) show that their common denominator $\Delta=16\det\bA$ grows as $\pi A^4$, so $\beta=O(A^{-2})$ as $A\to\infty$ (the detector saturates) and $\pi A^4$ is the only combination that stays finite when $\pi\to0$ and $A\to\infty$.

\begin{figure}[htbp]
  \centering
  \includegraphics[width=0.6\linewidth]{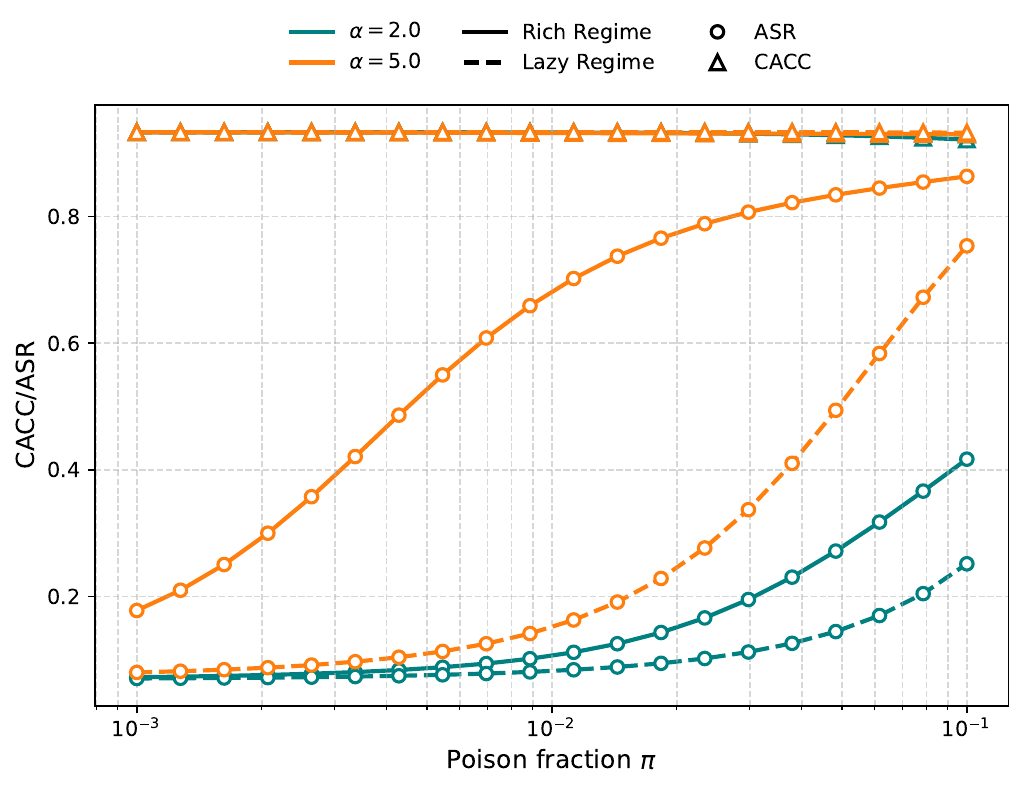}
  \caption{\textbf{Exact theory in the two regimes.} \emph{$\CACC$ (triangles) and $\ASR$ (circles) from Theorem~\ref{thm:main} as functions of $\pi$, for $\alpha=2$ and $\alpha=5$, in the feature-learning regime ($\bu=\bu^\star$, solid) and the linear regime ($\beta\equiv0$, dashed); $S=1.5$, $T=1$, $\rho=0$. In both regimes the clean accuracy stays at $\Phi(S)\approx0.93$, while the attack success rate is uniformly higher with feature learning and, at $\alpha=5$, exceeds $0.5$ at a poison fraction roughly ten times smaller than in the linear regime, consistent with \eqref{eq:ratio}.}}
  \label{fig:theory_asr_cacc}
\end{figure}

\begin{theorem}[Feature-learning regime]
\label{thm:fl}
In the feature-learning regime with a hidden trigger ($\bu=\bu^\star$, $\rho=0$):
\begin{enumerate}[label=(\alph*),leftmargin=2em,itemsep=1pt,topsep=2pt]
\item \textbf{(Small poison.)} $\CACC=\Phi(S)+O(\pi^2)$ and
\begin{equation}
  \ASR = \Phi(-S)
        + \pi\,\varphi(S)\left[
          \frac{(1+2S^2)\,A^2(A^2+2)}{4S}
          + \frac{1+S^2}{S}
          \right] + O(\pi^2),
  \label{eq:asr_expand}
\end{equation}
which exceeds the expansion of Theorem~\ref{thm:linear}(a) by exactly $\pi\varphi(S)\,(1+2S^2)A^4/(4S)$.
\item \textbf{(Stealth limit.)} As $\pi\to0$ and $\alpha\to\infty$ with the quartic trigger energy $\varrho:=\pi\alpha^4T^4$ fixed,
\begin{equation}
  \CACC\to\Phi(S),
  \qquad
  \ASR\to\ASR_{\rm fl}(\varrho) := \Phi\!\left(-S + \frac{1+2S^2}{S}\cdot\frac{\varrho}{4+\varrho}\right).
  \label{eq:fllaw}
\end{equation}
\item \textbf{(Ceiling.)} At any finite $\pi$ and for any alignment $\rho$, $\displaystyle\lim_{\alpha\to\infty}\ASR = \Phi\!\Bigl(\frac{1+S^2}{S} + \frac{\pi}{2S(1-\pi)}\Bigr) < 1$, independent of $T$ and $\rho$.
\item \textbf{(Budget curve.)} For a target $a<\Phi\!\bigl(\tfrac{1+S^2}{S}\bigr)$, with $\kappa_a$ as in Theorem~\ref{thm:linear},
\begin{equation}
  \alpha_{\rm fl}(\pi;a)
  = \frac{1}{T}\left(\frac{\varrho_a}{\pi}\right)^{1/4}\ \propto\ \pi^{-1/4},
  \qquad
  \varrho_a = \frac{4\kappa_a}{1-\kappa_a}.
  \label{eq:cost}
\end{equation}
\end{enumerate}
\end{theorem}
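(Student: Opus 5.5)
The plan is to work entirely in the reduced frame of Lemma~\ref{lem:reduce}. With $\bu=\bu^\star$ and $\rho=0$ we have $h=T$, $g=0$, $m_\perp=S$, $v_\parallel=0$ and $v_\perp=0$. The cluster means are then $\bme_\pm=\pm(0,S)$ and $\bme_p=(A,S)$ in the coordinates $(t,s_1)$, and the unknowns reduce to $(w_t,w_1,\beta,c)$. I would write the normal equations $\bA\btheta=\bb$ explicitly, using the moments of Appendix~\ref{app:moments}. Only $\Ex[t],\Ex[t^2],\Ex[t^3],\Ex[t^4]$ under the clusters and the cross moments with $s_1$ enter. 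The key structural simplification is that $s_1$ has the same mean $S$ in $\mathcal C_+$ and $\mathcal C_p$ and is independent of $t$. The system is therefore a $4\times4$ system with entries polynomial in $(\pi,A,S)$, and I would solve it in closed form by Cramer's rule; this is the content of Corollary~\ref{cor:flweights}. Having exact rational expressions with common denominator $\Delta=16\det\bA$ is what makes parts (b)--(d) tractable.

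For part (a), the CACC statement is already Corollary~\ref{cor:general}(ii). For the ASR, I would start from Theorem~\ref{thm:main} with $\sigma=|w_1|$. I would expand $A_0(\bme_p)$, $b(\bme_p)=w_t+\beta A$ and $\beta$ to first order in $\pi$, using Proposition~\ref{prop:beta}: $\beta=-\pi\frac{1+2S^2}{2(1+S^2)}A^2$, together with the $O(\pi)$ corrections to $w_t,w_1,c$ obtained from the same perturbation. At $\pi=0$ the integrand is $\Phi(-(S^2/(1+S^2)+0)/\sigma)$, constant in $z$, and it gives $\Phi(-S)$. The first-order term is $\varphi(S)$ times the derivative of the argument, and after averaging over $z$, $\Ex[bz]=0$ while $\Ex[\tfrac\beta2 z^2]=\tfrac\beta2$. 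Collecting terms should give $\frac{1+2S^2}{4S}A^2(A^2+2)+\frac{1+S^2}{S}$. As a consistency check, setting $\beta=0$ must reproduce Theorem~\ref{thm:linear}(a), so the difference is exactly the $\tfrac\beta2A^2$ contribution, namely $(1+2S^2)A^4/(4S)$.

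For part (b), I would substitute $\pi=\varrho/A^4$ into the exact weights and let $A\to\infty$. I expect $\Delta$ to behave like a $\pi$-independent constant plus $\pi A^4$ times a constant. Then $\beta A^2\to -c_1\varrho/(4+\varrho)$ with $c_1=\frac{2(1+2S^2)}{1+S^2}$, while $w_tA$, $\beta A$, $c$ and the $O(\pi A^2)$ pieces vanish, and $w_1\to S/(1+S^2)$. Inside the integrand, $b(\bme_p)z$ has coefficient $w_t+\beta A\to0$ and $\beta z^2\to0$, so the expectation collapses to a single $\Phi$ and yields \eqref{eq:fllaw}. Part (d) then follows by solving $\ASR_{\rm fl}(\varrho)=a$: this gives $\frac{\varrho}{4+\varrho}=\kappa_a$, i.e.\ $\varrho_a=4\kappa_a/(1-\kappa_a)$, and then $\alpha=T^{-1}(\varrho_a/\pi)^{1/4}$. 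One also needs monotonicity of the ASR in $\alpha$ in this limit, which is clear from the formula. For part (c), I would take $A\to\infty$ at fixed $\pi$ in the exact weights. Here $\beta=O(A^{-2})$ and the detector saturates, so $\frac\beta2A^2$, $w_tA$ and $c$ tend to finite limits that depend on $\pi$. For general $\rho$ I would redo this with $v_\parallel\ne0$, expecting the $\rho$-dependence to cancel because the poisoned cluster is isolated by $t\to\infty$.

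The main obstacle will be the exact finite-$\pi$ solution and its two double limits, i.e.\ controlling $\Delta$ and the numerators uniformly enough to identify the leading $\varrho$ and $A\to\infty$ behaviour, especially for (c) with general $\rho$. I would first do the $\rho=0$ case symbolically, organizing by powers of $A$, and then treat $\rho\ne0$ by noting that as $A\to\infty$ the poisoned cluster is perfectly fit by the $(t,\tfrac12t^2)$ features. In that limit the remaining read-out is the least-squares fit on the clean clusters reweighted to $1-\pi$, which should produce the extra $\pi/(2S(1-\pi))$ term.
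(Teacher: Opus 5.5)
Your plan for (a), (b), (d) and for (c) at $\rho=0$ follows the paper's route. You use the exact Cramer weights with denominator $\Delta=16\det\bA$, a first-order expansion inside the parabolic-boundary formula of Theorem~\ref{thm:main}, and the limits $\pi=\varrho/A^4$, $A\to\infty$, resp.\ $A\to\infty$ at fixed $\pi$. Two small corrections there. In (b) you should also record $\CACC\to\Phi(S)$, which is immediate since $c$, $w_t$ and $\beta$ tend to $0$. In (a), setting $\beta=0$ in your expansion does \emph{not} reproduce Theorem~\ref{thm:linear}(a), because the intercepts already differ at first order: $c_{\rm fl}=c_{\rm lin}+\pi\frac{(1+2S^2)A^2}{4(1+S^2)}+O(\pi^2)$. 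The difference reduces to the $\frac\beta2A^2$ term only because this shift is exactly cancelled by $\frac\beta2\Ex_z[z^2]=\frac\beta2$, i.e.\ $c_{\rm fl}+\frac\beta2=-\pi+O(\pi^2)$.

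The genuine gap is (c) for $\rho\neq0$.

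\emph{Your frame is wrong.} With $\bu=\bu^\star$ one has $\tu=\tv/T$, hence $h=T$, $g=S\rho$, $m_\perp=S\sqrt{1-\rho^2}$ and $v_\parallel=(ST\rho-gh)/m_\perp=0$ for every $\rho$. It is the clean clusters that acquire feature means $\pm S\rho$; the trigger never leaves the feature. With $h=T$, $g=0$, $v_\parallel\neq0$ you would need $v_\perp^2=-v_\parallel^2<0$.

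\emph{Your decoupling heuristic gives the wrong ceiling.} As $A\to\infty$ the pair $(w_t,\beta)$ absorbs only the \emph{mean} of the poisoned cluster, through the private intercept $w_tA+\frac\beta2A^2$. The within-cluster noise of that cluster still enters the loss. Its $s_1$-noise costs $\frac\pi2w_1^2$. For $\rho\neq0$, keeping $w_tA+\frac\beta2A^2$ bounded with $w_t=O(1)$ forces $\beta A\to-2w_t$, so the slope at the poisoned cluster is $b(\bme_p)\to-w_t\neq0$, costing $\frac\pi2w_t^2$. The correct limit problem is therefore the clean least squares plus the isotropic ridge $\frac\pi2\|\tw\|^2$.

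Without the ridge (your ``clean clusters only'' fit) one gets $\|\tw\|=S(1-\pi)/[(1-\pi)(1+S^2)+\pi^2/4]$, and a correction $\pi^2/(4S(1-\pi))$ instead of $\pi/(2S(1-\pi))$. With the ridge, $\tw\to2(1-\pi)\tmu/D_\infty$ and $c\to-\pi/D_\infty$, where $D_\infty=2(1+S^2)(1-\pi)+\pi$. The triggered margin then has mean exactly $-1$ and standard deviation $\|\tw\|$, so $\ASR\to\Phi\bigl(D_\infty/(2S(1-\pi))\bigr)$. Rotation invariance of the limit problem in the whitened plane makes this independent of $\rho$ and $T$.

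You have two ways to close the gap. One is to make this argument rigorous: you must justify convergence of the exact minimisers to those of the limit problem, and done properly it explains the $\rho$-independence more transparently than the paper does. The other is the paper's route: solve the $4\times4$ system with $g=S\rho$, $m_\perp=S\bar\rho$, $v_\parallel=0$, which gives \eqref{eq:alignedweights}. Then verify from the leading coefficients in $A$ that $A_0(\bme_p)\to-1$ and $w_1^2+b(\bme_p)^2\to\bigl(2S(1-\pi)/D_\infty\bigr)^2$.
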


\begin{table}[t]
\centering
\caption{Linear regime versus feature-learning regime (hidden trigger, $\rho=0$; $A=\alpha T$; $\kappa_a=S(S+\Phi^{-1}(a))/(1+2S^2)$). The two regimes share the clean accuracy, the form of the ASR law and the stealth ceiling; they differ in the conserved trigger energy and hence in the cost exponent.}
\label{tab:compare}
\small
\begin{tabular}{lll}
\toprule
 & \textbf{Linear regime} (Thm.~\ref{thm:linear}) & \textbf{Feature learning} (Thm.~\ref{thm:fl})\\
\midrule
Trained curvature & $\beta\equiv0$ & $\beta=-\pi\frac{1+2S^2}{2(1+S^2)}A^2+O(\pi^2)$\\
Trigger term in the margin & $\propto\pi A^2$ & $\tfrac{\beta}{2}A^2\propto\pi A^4$\\
Conserved invariant & $e=\pi\alpha^2T^2$ & $\varrho=\pi\alpha^4T^4$\\
Stealth-limit ASR & $\Phi\!\bigl(-S+\tfrac{1+2S^2}{S}\tfrac{e}{e+2}\bigr)$ & $\Phi\!\bigl(-S+\tfrac{1+2S^2}{S}\tfrac{\varrho}{4+\varrho}\bigr)$\\
Ceiling ($\pi\to0$) & $\Phi\!\bigl(\tfrac{1+S^2}{S}\bigr)$ & $\Phi\!\bigl(\tfrac{1+S^2}{S}\bigr)$\\
Budget curve $\alpha(\pi;a)$ & $\tfrac1T\bigl(\tfrac{2\kappa_a}{(1-\kappa_a)\pi}\bigr)^{1/2}$ & $\tfrac1T\bigl(\tfrac{4\kappa_a}{(1-\kappa_a)\pi}\bigr)^{1/4}$\\
Clean accuracy & $\Phi(S)+O(\pi^2)$ & $\Phi(S)+O(\pi^2)$\\
\bottomrule
\end{tabular}
\end{table}

Table~\ref{tab:compare} places the two regimes side by side and Figure~\ref{fig:theory_asr_cacc} plots the exact curves of Theorem~\ref{thm:main} in both. Particularly, Theorem~\ref{thm:fl} shows that the two regimes obey the same law in a different currency. The stealth-limit laws \eqref{eq:linlaw} and \eqref{eq:fllaw} have an identical form: the margin argument moves from the clean error level $-S$ toward the common ceiling $\tfrac{1+S^2}{S}$ by the same total budget $\tfrac{1+2S^2}{S}$, multiplied by a saturating fraction in $[0,1)$. What differs is the variable inside that fraction: the linear model spends the quadratic energy $e=\pi\alpha^2T^2$, the feature-learning model spends the quartic energy $\varrho=\pi\alpha^4T^4$. The reason is visible in the small-poison expansions, which agree except for the term $\pi\varphi(S)(1+2S^2)A^4/(4S)$ in \eqref{eq:asr_expand}: the detector responds to the trigger as $(\bu^{\star\top}\bx)^2\sim A^2$ and least squares scales its curvature by another $\pi A^2$. Specifically, we draw the following observations:
\begin{itemize}[leftmargin=2em,itemsep=1pt,topsep=2pt]
\item \emph{Effect of the poison fraction $\pi$:} conservation of $\varrho$ instead of $e$ halves the cost exponent, from $\pi^{-1/2}$ to $\pi^{-1/4}$. Dividing \eqref{eq:lincost} by \eqref{eq:cost}, the advantage at a fixed target is
\begin{equation}
  \frac{\alpha_{\rm lin}(\pi;a)}{\alpha_{\rm fl}(\pi;a)}
  = \left(\frac{\kappa_a}{(1-\kappa_a)\,\pi}\right)^{1/4},
  \label{eq:ratio}
\end{equation}
which grows without bound as the attack becomes stealthier.
\item \emph{Effect of the target $a$:} feature learning does not raise the ceiling, it reaches it at a lower cost. In the stealth limit both regimes saturate at $\Phi\!\bigl(\tfrac{1+S^2}{S}\bigr)$, and at finite $\pi$ the ceiling of Theorem~\ref{thm:fl}(c) exceeds it only by $\pi/(2S(1-\pi))$: as $A\to\infty$ the curvature decays as $\beta=O(A^{-2})$, the triggered statistic becomes linear-Gaussian again, and the ceiling is set by the clean classes as in the linear regime.
\item \emph{Effect of the geometry $(T,\rho)$ and on clean accuracy:} the ceiling depends on neither $T$ nor $\rho$, in both regimes $\alpha\propto1/T$, and $\CACC=\Phi(S)+O(\pi^2)$, so the amplification comes at no cost in clean accuracy and leaves no first-order signature for a defender to detect.
\end{itemize}
We point out that an auditor who models the adversary with a lazy learner concludes that a poison fraction $\pi$ needs a trigger of strength $\alpha_{\rm lin}(\pi;a)$; if the audited model has learned its features, the true requirement is smaller by the factor \eqref{eq:ratio}, an order of magnitude at the poison fractions of large-scale training. This is the theoretical counterpart of the observation of \citet{souly2025poisoning} that the number of poisoned documents needed to backdoor a language model does not grow with the corpus.

\begin{figure}[t]
  \centering
  \includegraphics[width=\linewidth]{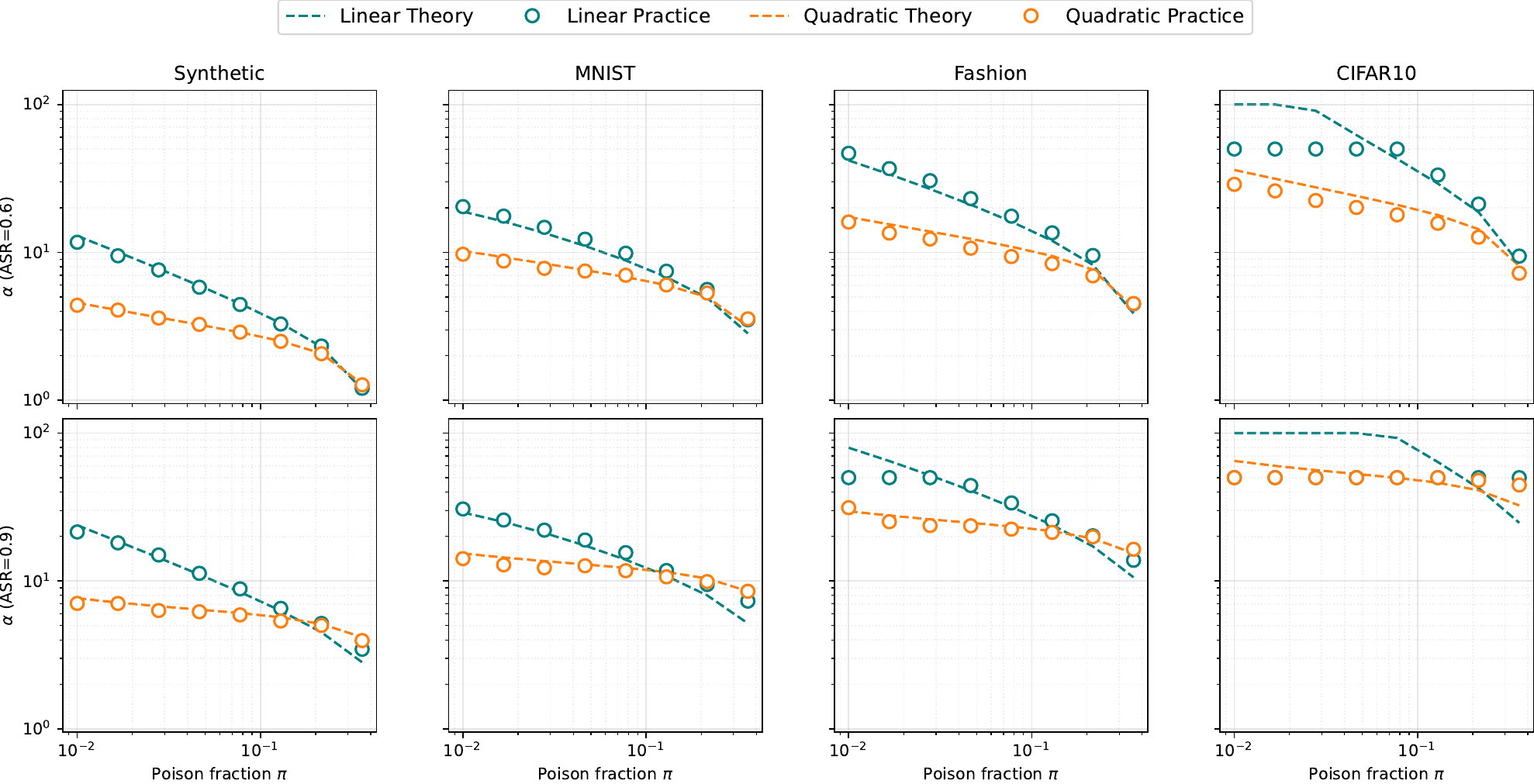}
  \caption{\textbf{Theory versus practice on synthetic and real data.} \emph{Trigger strength $\alpha$ required to reach $\ASR=0.6$ (top) and $\ASR=0.9$ (bottom) as a function of $\pi$. Markers are trained models (feature-learning regime in red, linear regime in blue) and lines the predictions of Theorem~\ref{thm:main} on the invariants extracted from the data. The quadratic neuron follows the $\alpha\propto\pi^{-1/4}$ law, the linear model the steeper $\alpha\propto\pi^{-1/2}$ law.}}
  \label{fig:scaling_datasets}
\end{figure}

\subsection{Partially Learned Features}
\label{sec:channels}

Theorem~\ref{thm:main} also covers models that have learned their features only partially, $0<h<T$. The trigger then splits into a component $\alpha h$ along the feature, which drives the quadratic detector (invariant $\pi(\alpha h)^4$), and a component $\alpha v_\perp$, $v_\perp:=\sqrt{T^2-h^2}$, orthogonal to it, which drives the linear channel of Section~\ref{sec:linear} (invariant $\pi\alpha^2v_\perp^2$); the attacker pays whichever is cheaper.

\begin{corollary}[Two channels]
\label{cor:channels}
Let $\rho=0$. The feature-learning channel ($\alpha\propto\pi^{-1/4}$) dominates for $h\gtrsim T\pi^{1/4}$ and the linear channel ($\alpha\propto\pi^{-1/2}$) below it. In the stealth regime the two laws combine into the generalized energy $G=2\pi\alpha^2v_\perp^2+\pi\alpha^4h^4$, $\ASR\simeq\Phi\bigl(-S+\tfrac{1+2S^2}{S}\tfrac{G}{4+G}\bigr)$, whose inversion (Appendix~\ref{app:channels}) gives a closed-form $\alpha(\pi;a,h)$ that is exact at both ends, $h\to0$ and $h=T$.
\end{corollary}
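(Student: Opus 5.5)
The plan is to work in the reduced frame of Lemma~\ref{lem:reduce} with $\rho=0$ and $0<h<T$. Since $\rho=0$ and $\bu$ is arbitrary, we have $v_\parallel=-gh/m_\perp$ and $v_\perp^2=T^2-h^2-v_\parallel^2$. For the statement as written, the relevant case is $g=0$: then $v_\parallel=0$, $v_\perp=\sqrt{T^2-h^2}$, and the poisoned mean is $(\alpha h,\,S,\,\alpha v_\perp)$. The feature coordinate $t$ then carries only the trigger component $\alpha h$, and the orthogonal coordinate $s_2$ carries $\alpha v_\perp$. I would solve the normal equations $\bA\btheta=\bb$ exactly in this $5\times5$ system, using the moment formulas of Appendix~\ref{app:moments}.

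\textbf{Block decoupling.} At $g=0$ the matrix $\bA$ nearly decouples into a $(t,\tfrac12t^2,1)$ block, which is the feature-learning problem of Theorem~\ref{thm:fl} with $T$ replaced by $h$, and an $(s_1,s_2)$ block, which is the linear problem of Theorem~\ref{thm:linear} with $T$ replaced by $v_\perp$. The two blocks are coupled only through cross moments weighted by $\omega_p=\pi/2$. I would track these couplings to leading order in the stealth scaling $\pi\to0$, $\alpha\to\infty$, with both $\pi\alpha^2v_\perp^2$ and $\pi\alpha^4h^4$ held fixed or bounded.

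\textbf{Triggered margin and ASR.} Next I would compute the triggered margin $A_0(\bme_p)$, the slope $b(\bme_p)$ and the width $\sigma$, and insert them into Theorem~\ref{thm:main}. The expected structure is additive. The linear channel shifts the margin by a term that the proof of Theorem~\ref{thm:linear} writes as a function of $e$ through $e/(e+2)$. The detector contributes $\tfrac{\beta}{2}(\alpha h)^2$, which gives the $\varrho/(4+\varrho)$ form of Theorem~\ref{thm:fl}. Both share the prefactor $\tfrac{1+2S^2}{S}$ and the same normalization by the clean-class pinning, so the combined argument should be $\tfrac{1+2S^2}{S}\,\tfrac{G}{4+G}$ with $G=2e+\varrho$. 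The factor $2$ matches $e/(e+2)=2e/(2e+4)$. The corresponding determinant $\Delta$ should behave like $4+2\pi\alpha^2v_\perp^2+\pi\alpha^4h^4$ up to constants, and this is what justifies the single saturating fraction.

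\textbf{Dominance threshold and endpoints.} The threshold follows from comparing the two terms of $G$ at the budget point. Holding $G=G_a$ fixed, the quartic term dominates when $\pi\alpha^4h^4\gtrsim\pi\alpha^2T^2$, that is when $\alpha h^2\gtrsim T$. Substituting $\alpha\sim\pi^{-1/4}/h$ from the quartic law gives $h\gtrsim T\pi^{1/4}$. Inverting the combined law means solving $G=G_a:=4\kappa_a/(1-\kappa_a)$, which is a quadratic in $\alpha^2$ with explicit positive root $\alpha^2=\bigl(-v_\perp^2+\sqrt{v_\perp^4+G_ah^4/\pi}\bigr)/h^4$. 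At $h\to0$ this root reduces to $e_a/(\pi T^2)$, and at $h=T$ (where $v_\perp=0$) it reduces to $(\varrho_a/\pi)^{1/2}/T^2$, recovering \eqref{eq:lincost} and \eqref{eq:cost}.

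\textbf{Main obstacle.} The hard step is showing that the cross terms between the two blocks, and the $O(A^{-2})$ saturation of $\beta$, do not alter the additive form at leading order. I would first verify the claimed form of $\Delta$ by symbolic computation, then bound the residual corrections uniformly in the stealth limit. I would also accept that "$\simeq$" means asymptotic agreement, which is exact at the two endpoints but only approximate in the interpolating regime.
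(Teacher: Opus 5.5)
Your proposal is correct, and it takes a genuinely different route from the paper.

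\textbf{The paper's route.} Appendix~\ref{app:channels} takes $\bu$ in the plane of $\tmu,\tv$, so the trigger component orthogonal to the feature lies along $\be_2$, together with the residual signal. It never solves the normal equations. Instead it applies Theorems~\ref{thm:linear} and~\ref{thm:fl} channel by channel and adds the two saturating fractions as an ansatz. It then inverts the resulting quadratic in $\alpha^2$, checks the endpoints exactly, and checks the interior only numerically ($10$--$30\%$ agreement).

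\textbf{Your route.} You fix $g=0$, which puts the orthogonal component in $s_2$ and matches the $v_\perp$ of Lemma~\ref{lem:reduce}, and you derive the law from $\bA\btheta=\bb$. The step you flag as the main obstacle closes in a few lines. Set $K=\tfrac{1+2S^2}{1+S^2}$ and rescale $W_t=w_t\alpha h$, $W_2=w_2\alpha v_\perp$, $B=\tfrac\beta2\alpha^2h^2$. Hold $e_v=\tfrac\pi2\alpha^2v_\perp^2$ and $\varrho_h=\tfrac\pi2\alpha^4h^4$ fixed; when both are positive this forces $h\asymp T\pi^{1/4}$. The centred normal equations then give $W_t\to0$, $w_1\to S/(1+S^2)$, $c\to0$, and
\[
(1+e_v)W_2+e_vB=-Ke_v,\qquad \varrho_hW_2+(2+\varrho_h)B=-K\varrho_h .
\]

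\textbf{The cross terms.} The off-diagonal entries come from $\Cov(s_2,\tfrac12t^2)=\tfrac\pi4(1-\tfrac\pi2)\alpha^3h^2v_\perp$. So the two blocks do not ``nearly decouple'': the coupling is $O(1)$ after rescaling. It nevertheless cancels in the determinant,
\[
(1+e_v)(2+\varrho_h)-e_v\varrho_h=2+2e_v+\varrho_h=\tfrac12(4+G),
\]
which is your predicted $\Delta$. Hence $W_2+B=-KG/(4+G)$, while $\beta$, $b(\bme_p)$ and $w_2$ vanish and $\sigma\to S/(1+S^2)$. Theorem~\ref{thm:main} then yields the stated law. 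In your family the additive law is therefore the exact joint stealth limit, not merely exact at the two endpoints as your last sentence concedes.

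\textbf{What each route covers.} The paper's heuristic covers the in-plane family, which is where the end-to-end directions lie (Appendix~\ref{app:endtoend}); yours does not. In that family $g=S\sqrt{1-h^2/T^2}\to S$ near the crossover. The quadratic feature then carries the signal, with clean variance $(2g^2+1)/2$ (the denominator in \eqref{eq:beta}), the two channels do not separate, and the additive law is only approximate. Your argument thus proves the corollary for $g=0$, which requires $d\ge3$, but not for the in-plane reading used in the appendix.
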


Figure~\ref{fig:phase} in Appendix~\ref{app:channels} shows the resulting phase diagram. The crossover $h=T\pi^{1/4}$ moves to smaller overlaps as the poison becomes rarer, so an auditor who cannot inspect the learned feature direction must assume that the cheaper channel is active.

\section{Experiments}
\label{sec:experiments}

\textbf{Setup.} We evaluate the theory on a synthetic Gaussian mixture and on three image benchmarks, MNIST (3 vs.\ 8), Fashion-MNIST (Pullover vs.\ Coat) and CIFAR-10 (Ship vs.\ Bird). For the image datasets we estimate the class means $\pm\hat{\bmu}$ and the pooled within-class covariance $\hat{\bSig}$ (with a ridge $10^{-4}\mI$) and extract the invariants \eqref{eq:invariants}, so that all predictions are parameter-free. To simulate a stealthy adversary, the trigger $\bv$ is the principal eigenvector of $\hat{\bSig}$, the direction of maximal within-class variance, and the feature direction is $\bu^\star=\hat{\bSig}^{-1}\bv/T$. We compare the \emph{feature-learning regime} (trained $\beta$) and the \emph{linear regime} ($\beta=0$). \emph{Practice} markers are obtained by sampling $n_{\rm train}=10{,}000$ points from the Gaussian mixture with parameters $(\hat{\bmu},\hat{\bSig})$, poisoning a fraction $\pi$ of the source class, solving the finite-sample least-squares problem and measuring the ASR on $n_{\rm test}=5{,}000$ held-out samples; \emph{Theory} lines are evaluated from Theorem~\ref{thm:main}. Empirically, for each $\pi$ on a logarithmic grid from $10^{-2}$ to $0.36$ we find by binary search the trigger strength that reaches $\ASR=0.6$ and $\ASR=0.9$, and fit $\log\alpha=k\log\pi+b$ to the resulting budget curves.

\textbf{Results.} Figure~\ref{fig:scaling_datasets} shows the budget curves. On all four datasets the empirical quadratic neuron tracks the theoretical curve, and the fitted slopes are $k\approx-\tfrac14$ in the feature-learning regime and $k\approx-\tfrac12$ in the linear regime, as predicted by Theorems~\ref{thm:linear} and~\ref{thm:fl}; the gap between the two regimes widens as $\pi$ decreases, in agreement with \eqref{eq:ratio}. Table~\ref{tab:cacc_asr_results} in Appendix~\ref{app:exp} reports $\CACC$ and $\ASR$ on a grid of $(\pi,\alpha)$: in both regimes the clean accuracy is unchanged by the poison, while the attack success rate is uniformly larger with feature learning (on MNIST, $0.97$ against $0.15$ at $\pi=0.01$, $\alpha=1$). Finally, Figure~\ref{fig:mlp} shows that the separation is not specific to the quadratic neuron: a network trained in the rich feature learning regime requires a smaller trigger than the same network trained in the lazy regime.

\section{Discussions \& Conclusion}
\label{sec:conclusion}

In this paper, we studied backdoor poisoning through a statistical model that isolates the role of feature learning: a quadratic neuron trained by least squares on a poisoned Gaussian mixture. We derived exact expressions for the clean accuracy and the attack success rate, valid for arbitrary noise covariance, finite poison fraction and trigger strength, and used them to show that a model without feature learning obeys the budget curve $\alpha\propto\pi^{-1/2}$, whereas a model that learns its feature obeys $\alpha\propto\pi^{-1/4}$: the poison creates a quadratic trigger detector whose response scales as $\alpha^4$, which halves the cost exponent without raising the attack ceiling and without affecting the clean accuracy. Our results offer a mechanism for the observation that the number of poisoned samples needed to backdoor a large model does not grow with its training corpus, and imply that audits calibrated on lazy-regime estimates underestimate the adversary by a factor that grows as $\pi^{-1/4}$ in our considered setting. Extending the analysis to a hidden layer with a general activation, to finite-sample training and to trained feature directions is left for future work, and could potentially bring more understanding on the inner interaction between backdoor attacks and neural feature learning.

\section*{Acknowledgements}
We thank Sara Tucci-Piergiovanni, Sami Souihi, and Mohamed Tamaazousti for helpful discussions. The computational work presented in this paper was performed using the CEA List FactoryIA supercomputer, with financial support from the Île-de-France Regional Council.

\bibliography{references}

\appendix
\section{Proof of Theorem~\ref{thm:main} and of Corollary~\ref{cor:general}}
\label{app:general}

Throughout, $\Phi$ and $\varphi$ are the standard normal distribution and density, and $\Ex_z$ denotes expectation over $z\sim\mathcal N(0,1)$.

\subsection{Whitening}
\label{app:whiten}

\begin{lemma}[Whitening equivariance]
\label{lem:whiten}
Let $\tx=\bSig^{-1/2}\bx$, $\tw=\bSig^{1/2}\bw$, $\tu=\bSig^{1/2}\bu$. Then $f(\bx)=\tw^\top\tx+\tfrac\beta2(\tu^\top\tx)^2+c$ for all $\bx$, the whitened training distribution is the mixture \eqref{eq:poisoned} with $(\bmu,\bv,\bSig)$ replaced by $(\tmu,\tv,\mI)$, $\tmu=\bSig^{-1/2}\bmu$, $\tv=\bSig^{-1/2}\bv$, and $\|\tu\|=1$. The population loss \eqref{eq:loss}, its minimiser (expressed in the tilded variables), $\CACC$ and $\ASR$ are all unchanged. Moreover $S=\|\tmu\|$, $T=\|\tv\|$, $\rho=\tmu^\top\tv/(ST)$, $g=\tu^\top\tmu$ and $h=\tu^\top\tv$.
\end{lemma}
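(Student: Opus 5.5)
The proof is a direct change of variables; every claim follows from the invertibility of $\bSig^{1/2}$ and the affine transformation of Gaussians. I will first verify the pointwise identity for $f$. Since $\bSig^{1/2}$ is symmetric, $\tw^\top\tx=(\bSig^{1/2}\bw)^\top\bSig^{-1/2}\bx=\bw^\top\bx$, and in the same way $\tu^\top\tx=\bu^\top\bx$. Substituting these into \eqref{eq:neuron} gives $f(\bx)=\tw^\top\tx+\tfrac\beta2(\tu^\top\tx)^2+c$ for every $\bx$, with the same $(\beta,c)$. The normalization $\bu^\top\bSig\bu=1$ reads $\|\tu\|^2=\bu^\top\bSig^{1/2}\bSig^{1/2}\bu=1$.

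Next I will push the mixture \eqref{eq:poisoned} through the linear map $\bx\mapsto\bSig^{-1/2}\bx$. Each cluster $\bme_c+\bz$ with $\bz\sim\mathcal N(\bzero,\bSig)$ becomes $\bSig^{-1/2}\bme_c+\tz$, where $\tz=\bSig^{-1/2}\bz\sim\mathcal N(\bzero,\bSig^{-1/2}\bSig\bSig^{-1/2})=\mathcal N(\bzero,\mI)$. The weights $\omega_c$ and labels $y$ are unchanged. The transformed means are $\pm\tmu$ and $\tmu+\alpha\tv$, by linearity applied to $\bmu+\alpha\bv$. The clean and triggered test points transform in the same way.

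Then I will argue invariance of the loss, its minimizer, and the metrics. Because $f$ and $y$ are identical pointwise as functions of the sample, $\mathcal L$ evaluated at $(\bw,\beta,c)$ under the original law equals the loss at $(\tw,\beta,c)$ under the whitened law. The map $(\bw,\beta,c)\mapsto(\bSig^{1/2}\bw,\beta,c)$ is a bijection, so minimizers correspond. Uniqueness, already noted from strict convexity, means the whitened minimizer is exactly the tilded image of $\btheta^\star$. Likewise $\CACC$ and $\ASR$ in \eqref{eq:metrics} are probabilities of events $\{f\gtrless0\}$, and $f$ takes the same value on corresponding points, so they are preserved.

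Finally I will check the invariants. $S^2=\bmu^\top\bSig^{-1}\bmu=\|\bSig^{-1/2}\bmu\|^2=\|\tmu\|^2$, and similarly $T=\|\tv\|$. Also $\bmu^\top\bSig^{-1}\bv=\tmu^\top\tv$ gives the expression for $\rho$. For the feature overlaps, $g=\bu^\top\bmu=(\bSig^{1/2}\bu)^\top(\bSig^{-1/2}\bmu)=\tu^\top\tmu$, and $h=\tu^\top\tv$ follows the same way.

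No step is a genuine obstacle. The only point needing care is the minimizer statement: I must state explicitly that the reparametrization is bijective, so that the argmin maps to the argmin rather than merely to a point with the same loss value.
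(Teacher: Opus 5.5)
Your proposal is correct and follows essentially the same route as the paper's proof. Both arguments use the pointwise identity $\bw^\top\bx=\tw^\top\tx$ (and likewise for $\bu$), push each Gaussian cluster forward to unit covariance, use bijectivity of $\bw\mapsto\tw$ to match the minimisers, use pointwise sign invariance of $f$ for $\CACC$ and $\ASR$, and compute the five invariants directly.
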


\begin{proof}
$\bw^\top\bx=(\bSig^{1/2}\bw)^\top(\bSig^{-1/2}\bx)=\tw^\top\tx$ and likewise $\bu^\top\bx=\tu^\top\tx$, which gives the identity for $f$. If $\bx=\bme_c+\bz$ with $\bz\sim\mathcal N(\bzero,\bSig)$ then $\tx=\bSig^{-1/2}\bme_c+\tz$ with $\tz\sim\mathcal N(\bzero,\mI)$, and $\bSig^{-1/2}\bme_c\in\{\tmu,-\tmu,\tmu+\alpha\tv\}$. The map $\bw\mapsto\tw$ is a bijection, so minimising the loss over $\bw$ or over $\tw$ is the same problem, and the sign of $f$ is pointwise unchanged, hence the metrics are. Finally $\|\tu\|^2=\bu^\top\bSig\bu=1$, $\|\tmu\|^2=\bmu^\top\bSig^{-1}\bmu=S^2$, $\|\tv\|^2=T^2$, $\tmu^\top\tv=\bmu^\top\bSig^{-1}\bv=ST\rho$, and $\tu^\top\tmu=(\bSig^{1/2}\bu)^\top(\bSig^{-1/2}\bmu)=\bu^\top\bmu=g$, $\tu^\top\tv=\bu^\top\bv=h$.
\end{proof}

By Lemma~\ref{lem:whiten} we work in whitened coordinates from now on and drop the tildes: the noise is $\mathcal N(\bzero,\mI)$, the signal is $\tmu$ with $\|\tmu\|=S$, the trigger is $\tv$ with $\|\tv\|=T$, the feature direction $\tu$ is a unit vector, and the three cluster means are $\tmu$, $-\tmu$, $\tmu+\alpha\tv$.

\subsection{Proof of Lemma~\ref{lem:reduce} (reduction to a three-dimensional frame)}
\label{app:reduce}

\paragraph{The read-out lies in the span of signal, trigger and feature.}
Let $W=\operatorname{span}(\tmu,\tv,\tu)$ and let $\bq\in W^\perp$ be arbitrary. For fixed $(\beta,c)$ the stationarity condition of \eqref{eq:loss} with respect to $\tw$ reads
\begin{equation}
  \Ex\bigl[\tx\,(\tw^\top\tx+\tfrac\beta2 t^2+c-y)\bigr]=\bzero,
  \qquad t=\tu^\top\tx .
  \label{eq:statw}
\end{equation}
Take the inner product of \eqref{eq:statw} with $\bq$. Write $\tx=\bme_c+\tz$ for the cluster $c$ of the sample. Since $\bq\perp\bme_c$, $\bq^\top\tx=\bq^\top\tz$, and since $\bq\perp\tu$, the Gaussian variables $\bq^\top\tz$ and $\tu^\top\tz$ are uncorrelated hence independent. The bracket in \eqref{eq:statw} depends on the sample only through $(c,\ \tw^\top\tz,\ \tu^\top\tz)$; decompose $\tw=\tw_W+\tw_\perp$ with $\tw_W\in W$, $\tw_\perp\in W^\perp$. Then
\begin{align*}
\bq^\top\Ex\bigl[\tx(\cdots)\bigr]
&=\Ex\bigl[\bq^\top\tz\,(\tw_\perp^\top\tz)\bigr]
+\Ex\bigl[\bq^\top\tz\,\underbrace{(\tw_W^\top\tx+\tfrac\beta2t^2+c-y)}_{\text{independent of }\bq^\top\tz}\bigr]\\
&=\bq^\top\tw_\perp+0 ,
\end{align*}
because $\Ex[\tz\tz^\top]=\mI$ and $\Ex[\bq^\top\tz]=0$. Hence $\bq^\top\tw_\perp=0$ for every $\bq\in W^\perp$, i.e.\ $\tw_\perp=\bzero$ and $\tw\in W$. Consequently $f$ depends on $\tx$ only through its orthogonal projection onto $W$, and minimising \eqref{eq:loss} over $\tw\in\R^d$ is the same as minimising over $\tw\in W$.

\paragraph{The frame.}
Let $\be_1=\tu$. If $m_\perp=\sqrt{S^2-g^2}>0$, let $\be_2=(\tmu-g\tu)/m_\perp$; this is a unit vector orthogonal to $\be_1$ because $\tu^\top\tmu=g$ and $\|\tmu-g\tu\|^2=S^2-2g^2+g^2=m_\perp^2$. Let $\be_3$ be a unit vector in $W$ orthogonal to $\be_1,\be_2$ (if $\dim W<3$, complete with any unit vector orthogonal to $W$; the corresponding coordinate then has zero mean in every cluster and, by the argument above applied to $W$, zero weight). In this frame $\tmu=g\be_1+m_\perp\be_2$ by construction. Writing $\tv=h\be_1+v_\parallel\be_2+v_\perp\be_3$, we have $h=\tu^\top\tv$, $v_\parallel=\be_2^\top\tv=(\tmu^\top\tv-g\,\tu^\top\tv)/m_\perp=(ST\rho-gh)/m_\perp$, and $v_\perp^2=\|\tv\|^2-h^2-v_\parallel^2=T^2-h^2-v_\parallel^2$; choosing the orientation of $\be_3$ makes $v_\perp\ge0$. If $m_\perp=0$ then $\tmu=\pm S\tu$ and we take $\be_2,\be_3$ any orthonormal completion with $\be_3\propto\tv-h\tu$, so that $v_\parallel=0$ and $v_\perp=\sqrt{T^2-h^2}$. The cluster means \eqref{eq:frame} follow by reading $\tmu$, $-\tmu$ and $\tmu+\alpha\tv$ in this basis. Since the frame is orthonormal and the noise is $\mathcal N(\bzero,\mI)$, the three noise coordinates are independent standard Gaussians. Finally $f=\tw^\top\tx+\tfrac\beta2 t^2+c$ with $\tw=w_t\be_1+w_1\be_2+w_2\be_3$ is Lemma~\ref{lem:reduce}.

\paragraph{The in-plane case.}
If $\bu\in\operatorname{span}(\bSig^{-1}\bmu,\bSig^{-1}\bv)$ then $\tu=\bSig^{1/2}\bu\in\operatorname{span}(\tmu,\tv)$, so $W$ is two-dimensional, $\be_3\perp W$, $v_\perp=0$, and $w_2=0$ by the projection argument. Both regimes are in-plane: the linear regime does not use $\bu$ at all (equivalently any $\bu$ with $h=0$ may be used, see Appendix~\ref{app:linear}), and $\bu^\star\propto\bSig^{-1}\bv$. \hfill$\square$

\subsection{Population moments}
\label{app:moments}

In the frame of Lemma~\ref{lem:reduce} the model is linear in $\bphi=(t,\,s_1,\,s_2,\,\tfrac12 t^2,\,1)^\top$, so the minimiser of \eqref{eq:loss} solves
\begin{equation}
  \bA\,\btheta = \bb,
  \qquad
  \bA = \Ex[\bphi\bphi^\top] = \sum_{c\in\{+,-,p\}}\omega_c\,\bM(\bme_c),
  \qquad
  \bb = \Ex[\bphi\,y] = \sum_{c\in\{+,-,p\}}\omega_c\,y_c\,\bar{\bphi}(\bme_c),
  \label{eq:normal}
\end{equation}
where, for a cluster with mean $\bme=(m_t,m_1,m_2)$ and unit isotropic noise,
\begingroup\small\setlength{\arraycolsep}{3pt}
\begin{equation}
  \bar{\bphi}(\bme)=
  \begin{pmatrix} m_t\\ m_1\\ m_2\\ \tfrac12(m_t^2+1)\\ 1\end{pmatrix},
  \quad
  \bM(\bme)=
  \begin{pmatrix}
    m_t^2+1 & m_t m_1 & m_t m_2 & \tfrac12(m_t^3+3m_t) & m_t\\
    \cdot & m_1^2+1 & m_1 m_2 & \tfrac12 m_1(m_t^2+1) & m_1\\
    \cdot & \cdot & m_2^2+1 & \tfrac12 m_2(m_t^2+1) & m_2\\
    \cdot & \cdot & \cdot & \tfrac14(m_t^4+6m_t^2+3) & \tfrac12(m_t^2+1)\\
    \cdot & \cdot & \cdot & \cdot & 1
  \end{pmatrix}
  \label{eq:moments}
\end{equation}
\endgroup
(symmetric). Indeed, for a single cluster with mean $\bme=(m_t,m_1,m_2)$ and independent unit-variance coordinates, the moments of $\bphi$ follow from $\Ex z=\Ex z^3=0$, $\Ex z^2=1$, $\Ex z^4=3$ for $z\sim\mathcal N(0,1)$:
\begin{gather*}
\Ex t^2=m_t^2+1,\qquad \Ex t^3=m_t^3+3m_t,\qquad \Ex t^4=m_t^4+6m_t^2+3,\\
\Ex[t s_j]=m_t m_j,\qquad \Ex[s_j\tfrac12t^2]=\tfrac12 m_j(m_t^2+1),\qquad \Ex[s_1 s_2]=m_1m_2,
\end{gather*}
which gives $\bar{\bphi}(\bme)$ and $\bM(\bme)$ in \eqref{eq:moments}. Summing over the three clusters with weights $\omega_c$ and labels $y_c$ gives $\bA$ and $\bb$ in \eqref{eq:normal}. For a monomial $q(\bme)$ write $\langle q\rangle:=\sum_c\omega_c q(\bme_c)$; because the clean means are $\pm\bme_+$, an even monomial in $\bme$ contributes $\tfrac{2-\pi}{2}q(\bme_+)+\tfrac\pi2 q(\bme_p)$ and an odd one $-\tfrac\pi2 q(\bme_+)+\tfrac\pi2 q(\bme_p)$. Writing $\nu:=g+\alpha h$ for the feature mean of the poisoned cluster, the sums needed are
\begingroup\small
\begin{align*}
\langle m_t\rangle&=\tfrac{\pi\alpha h}{2},\qquad
\langle m_1\rangle=\tfrac{\pi\alpha v_\parallel}{2},\qquad
\langle m_2\rangle=\tfrac{\pi\alpha v_\perp}{2},\qquad
\langle m_t^2\rangle=g^2+\tfrac{\pi}{2}(\alpha^2h^2+2\alpha g h),\\
\langle m_1^2\rangle&=m_\perp^2+\tfrac{\pi}{2}(\alpha^2v_\parallel^2+2\alpha m_\perp v_\parallel),\qquad
\langle m_2^2\rangle=\tfrac{\pi}{2}\alpha^2v_\perp^2,\qquad
\langle m_t m_2\rangle=\tfrac{\pi}{2}\alpha v_\perp\nu,\\
\langle m_tm_1\rangle&=g m_\perp+\tfrac{\pi}{2}(\alpha^2h v_\parallel+\alpha g v_\parallel+\alpha h m_\perp),\qquad
\langle m_1m_2\rangle=\tfrac{\pi}{2}\alpha v_\perp(m_\perp+\alpha v_\parallel),\\
\langle m_t^3\rangle&=\tfrac{\pi\alpha h}{2}(\alpha^2h^2+3\alpha g h+3g^2),\qquad
\langle m_2 m_t^2\rangle=\tfrac{\pi}{2}\alpha v_\perp\nu^2,\\
\langle m_t^4\rangle&=g^4+\tfrac{\pi\alpha h}{2}(\alpha^3h^3+4\alpha^2g h^2+6\alpha g^2h+4g^3),\\
\langle m_1 m_t^2\rangle&=\tfrac{\pi\alpha}{2}(\alpha^2h^2v_\parallel+2\alpha g h v_\parallel+\alpha h^2 m_\perp+g^2 v_\parallel+2g h m_\perp).
\end{align*}
\endgroup
Substituting into \eqref{eq:moments}, the entries of $\bA$ (symmetric) in the order $(t,s_1,s_2,\tfrac12t^2,1)$ are
\begingroup\small
\begin{equation}
\begin{aligned}
A_{11}&=g^2+\tfrac{\pi}{2}(\alpha^2h^2+2\alpha g h)+1,\qquad
A_{12}=g\,m_\perp+\tfrac{\pi}{2}(\alpha^2 h v_\parallel+\alpha g v_\parallel+\alpha h m_\perp),\qquad
A_{13}=\tfrac{\pi}{2}\alpha v_\perp\nu,\\
A_{14}&=\tfrac{\pi\alpha h}{4}\bigl(\alpha^2h^2+3\alpha g h+3g^2+3\bigr),\qquad
A_{15}=\tfrac{\pi\alpha h}{2},\qquad
A_{22}=m_\perp^2+\tfrac{\pi}{2}(\alpha^2v_\parallel^2+2\alpha m_\perp v_\parallel)+1,\\
A_{23}&=\tfrac{\pi}{2}\alpha v_\perp(m_\perp+\alpha v_\parallel),\qquad
A_{24}=\tfrac{\pi\alpha}{4}\bigl(\alpha^2h^2v_\parallel+2\alpha g h v_\parallel+\alpha h^2m_\perp+g^2v_\parallel+2g h m_\perp+v_\parallel\bigr),\\
A_{25}&=\tfrac{\pi\alpha v_\parallel}{2},\qquad
A_{33}=1+\tfrac{\pi}{2}\alpha^2v_\perp^2,\qquad
A_{34}=\tfrac{\pi}{4}\alpha v_\perp(\nu^2+1),\qquad
A_{35}=\tfrac{\pi\alpha v_\perp}{2},\qquad
A_{55}=1,\\
A_{44}&=\tfrac34+\tfrac18\Bigl(\pi\bigl(\alpha^4h^4+4\alpha^3g h^3+6\alpha^2g^2h^2+6\alpha^2h^2+4\alpha g^3h+12\alpha g h\bigr)+2g^4+12g^2\Bigr),\\
A_{45}&=\tfrac14\bigl(\pi(\alpha^2h^2+2\alpha g h)+2g^2\bigr)+\tfrac12 ,
\end{aligned}
\label{eq:Aentries}
\end{equation}
\endgroup
and, using $y_+=+1$, $y_-=y_p=-1$,
\begin{equation}
\bb=\Bigl(
g(1-\pi)-\tfrac{\pi\alpha h}{2},\ \
m_\perp(1-\pi)-\tfrac{\pi\alpha v_\parallel}{2},\ \
-\tfrac{\pi\alpha v_\perp}{2},\ \
-\tfrac{\pi}{4}\bigl(\alpha^2h^2+2\alpha g h+2g^2+2\bigr),\ \
-\pi
\Bigr)^\top .
\label{eq:bentries}
\end{equation}
For instance $b_1=\tfrac{1-\pi}{2}g+\tfrac12 g-\tfrac\pi2\nu=g(1-\pi)-\tfrac{\pi\alpha h}{2}$ and $b_4=\tfrac12\bigl[\tfrac{1-\pi}{2}(g^2+1)-\tfrac12(g^2+1)-\tfrac\pi2(\nu^2+1)\bigr]=-\tfrac\pi4(\nu^2+g^2+2)$. In the in-plane case $v_\perp=0$ the third row and column reduce to $A_{33}=1$ and zeros, $b_3=0$, so $w_2=0$ and the system is the $4\times4$ one in $(w_t,w_1,\beta,c)$. All entries were also checked symbolically against a direct summation of the per-cluster moments, and numerically against Monte-Carlo estimates of $\Ex[\bphi\bphi^\top]$ and $\Ex[\bphi y]$.

\paragraph{Existence and uniqueness.}
$\bA=\Ex[\bphi\bphi^\top]\succ0$: if $\bc^\top\bA\bc=\Ex[(\bc^\top\bphi)^2]=0$ then the polynomial $\bc^\top\bphi(t,s_1,s_2)=c_1t+c_2s_1+c_3s_2+\tfrac{c_4}{2}t^2+c_5$ vanishes almost everywhere on $\R^3$ (the mixture has a positive density), hence $\bc=\bzero$. Therefore $\mathcal L$ is strictly convex and $\btheta^\star=\bA^{-1}\bb$ is its unique minimiser.

\subsection{The parabolic-boundary formula and the proof of Theorem~\ref{thm:main}}
\label{app:master}

Fix a test cluster with mean $\bme=(m_t,m_1,m_2)$ and write the test point as $t=m_t+z$, $s_1=m_1+z_1$, $s_2=m_2+z_2$ with $z,z_1,z_2$ i.i.d.\ $\mathcal N(0,1)$. Substituting in Lemma~\ref{lem:reduce},
\begin{equation}
f=\underbrace{w_tm_t+w_1m_1+w_2m_2+c+\tfrac\beta2 m_t^2}_{A_0(\bme)}
+\underbrace{(w_t+\beta m_t)}_{b(\bme)}\,z+\tfrac\beta2 z^2
+\underbrace{w_1z_1+w_2z_2}_{\sim\,\mathcal N(0,\sigma^2)} .
\label{eq:fdecomp}
\end{equation}
The last term is a centred Gaussian of variance $\sigma^2=w_1^2+w_2^2$, independent of $z$. Conditioning on $z$,
\[
\Prob(f>0\mid z)=\Prob\bigl(w_1z_1+w_2z_2>-(A_0+bz+\tfrac\beta2z^2)\bigr)=\Phi\Bigl(\frac{A_0(\bme)+b(\bme)z+\tfrac\beta2z^2}{\sigma}\Bigr),
\]
and averaging over $z$ gives the parabolic-boundary formula
\begin{equation}
  \Prob\bigl(f>0\ \big|\ \bx\in\text{cluster }\bme\bigr)
  = \Ex_z\!\left[\Phi\!\left(\frac{A_0(\bme) + b(\bme)\,z + \tfrac{\beta}{2}z^2}{\sigma}\right)\right],
  \label{eq:boundary}
\end{equation}
with the convention $\Phi(x/0)=\mathbf 1\{x>0\}$ when $\sigma=0$; $\Prob(f<0\mid z)$ is obtained by flipping the sign of the argument. Inserting the three cluster means \eqref{eq:frame} into \eqref{eq:boundary}, with the event $f>0$ for $\mathcal C_+$ and $f<0$ for $\mathcal C_-$ and $\mathcal C_p$, gives \eqref{eq:thm}. The trained weights are those of Appendix~\ref{app:moments}. \hfill$\square$

Geometrically, $\{f=0\}$ is the parabola $s\mapsto-(A_0+bz+\tfrac\beta2z^2)/\sigma$ in the $(z,s)$ plane of feature noise and orthogonal noise, and \eqref{eq:boundary} is the Gaussian mass on one side of it. When $\beta=0$ the parabola is a line and $\Ex_z\Phi((A_0+w_tz)/\sigma)=\Phi\bigl(A_0/\sqrt{w_t^2+\sigma^2}\bigr)$, the linear-classifier formula.

\subsection{Proof of Corollary~\ref{cor:general}(i)}
At $\pi=0$ all poison-induced entries vanish: by \eqref{eq:Aentries}--\eqref{eq:bentries}, $\bA$ is block-diagonal with linear block $\bigl(\begin{smallmatrix}g^2+1&gm_\perp&0\\ gm_\perp&m_\perp^2+1&0\\0&0&1\end{smallmatrix}\bigr)$ and quadratic block $\bQ_0=\bigl(\begin{smallmatrix}\frac14(g^4+6g^2+3)&\frac12(g^2+1)\\ \frac12(g^2+1)&1\end{smallmatrix}\bigr)$, and $\bb=(g,m_\perp,0,0,0)^\top$. The linear block times $(g,m_\perp,0)/(1+S^2)$ equals $(g,m_\perp,0)$ because $g^2+m_\perp^2=S^2$, and the quadratic block has zero right-hand side, so $\btheta_0=\bigl(\tfrac{g}{1+S^2},\tfrac{m_\perp}{1+S^2},0,0,0\bigr)$, i.e.\ $\tw=\tmu/(1+S^2)$, $\beta=c=0$, for every $\bu$. Then $f=\tmu^\top\tx/(1+S^2)$, and on a cluster with mean $\bme$ it is Gaussian with mean $\tmu^\top\bme/(1+S^2)$ and standard deviation $S/(1+S^2)$. For $\bme=\pm\tmu$ this gives $\Phi(S)$ for both clean classes, and for $\bme=\tmu+\alpha\tv$ the event $f<0$ has probability $\Phi\bigl(-(S^2+\alpha ST\rho)/S\bigr)=\Phi(-S-\alpha T\rho)$. \hfill$\square$

\subsection{Proof of Corollary~\ref{cor:general}(ii) (stealth to first order)}
\label{app:stealth}

Since $\bA(\pi)$ and $\bb(\pi)$ are polynomial in $\pi$ and $\bA(0)$ is invertible, $\btheta^\star=\btheta_0+\pi\btheta_1+O(\pi^2)$ with $\btheta_0$ as above. Let $\hat{\bmu}=\tmu/S$ and $\sigma_0=S/(1+S^2)$, and decompose the whitened test noise as $\tz=\zeta\hat{\bmu}+\tz_\perp$ with $\zeta=\hat{\bmu}^\top\tz\sim\mathcal N(0,1)$ independent of $\tz_\perp\perp\hat{\bmu}$. On a clean source point $\tx=\tmu+\tz$,
\[
f=\sigma_0(S+\zeta)+\pi f_1(\zeta,\tz_\perp)+O(\pi^2),\qquad
f_1=\tw_1^\top(\tmu+\tz)+\tfrac{\beta_1}{2}\bigl(g+\tu^\top\tz\bigr)^2+c_1 ,
\]
where $(\tw_1,\beta_1,c_1)$ are the first-order corrections. For fixed $\tz_\perp$ the map $\zeta\mapsto S+\zeta+\tfrac{\pi}{\sigma_0}f_1$ has, for small $\pi$, a unique root $\zeta^\star=-S-\tfrac{\pi}{\sigma_0}f_1(-S,\tz_\perp)+O(\pi^2)$, so $\Prob(f>0\mid\tz_\perp)=\Phi(-\zeta^\star)=\Phi(S)+\tfrac{\pi}{\sigma_0}\varphi(S)f_1(-S,\tz_\perp)+O(\pi^2)$. At $\zeta=-S$ the point is $\tx=\tmu-S\hat{\bmu}+\tz_\perp=\tz_\perp$, so $f_1(-S,\tz_\perp)=\tw_1^\top\tz_\perp+\tfrac{\beta_1}{2}\bigl(g-g+\tu^\top\tz_\perp\bigr)^2+c_1$, using $\tu^\top\hat{\bmu}=g/S$. Averaging over $\tz_\perp$ (which has $\Ex[\tz_\perp]=\bzero$ and $\Ex[(\tu^\top\tz_\perp)^2]=1-g^2/S^2$),
\[
\Prob(f>0\mid\mathcal C_+)=\Phi(S)+\frac{\pi\varphi(S)}{\sigma_0}\,\delta+O(\pi^2),
\qquad
\delta:=\tfrac{\beta_1}{2}\Bigl(1-\tfrac{g^2}{S^2}\Bigr)+c_1 .
\]
On a clean target point $\tx=-\tmu+\tz$, $f=-\sigma_0(S-\zeta)+\pi f_1^-+O(\pi^2)$ with $f_1^-=\tw_1^\top(-\tmu+\tz)+\tfrac{\beta_1}{2}(-g+\tu^\top\tz)^2+c_1$; the root is now $\zeta^\star=S-\tfrac\pi{\sigma_0}f_1^-(S,\tz_\perp)+O(\pi^2)$ and $\Prob(f<0\mid\tz_\perp)=\Phi(\zeta^\star)$. At $\zeta=S$ the point is again $\tx=\tz_\perp$ and the same computation gives $f_1^-(S,\tz_\perp)$ with expectation $\delta$. Hence
\[
\Prob(f<0\mid\mathcal C_-)=\Phi(S)-\frac{\pi\varphi(S)}{\sigma_0}\,\delta+O(\pi^2) .
\]
The two $O(\pi)$ shifts are equal and opposite, and their average $\CACC=\Phi(S)+O(\pi^2)$. The interchange of the $\pi$-expansion with the expectation over $\tz_\perp$ is justified because $f_1$ is a polynomial in $\tz_\perp$ with Gaussian moments. \hfill$\square$

\section{Proofs for the linear regime (Section~\ref{sec:linear})}
\label{app:linear}

\subsection{Proof of Proposition~\ref{prop:linear}}
\emph{(i)} Let $h=0$. By \eqref{eq:frame} the feature means of the three clusters are $g,-g,g$, so the quadratic feature $\phi_4=\tfrac12t^2$ with $t\sim\mathcal N(\pm g,1)$ has the same distribution in every cluster; in particular its mean $m_4=\tfrac12(g^2+1)$ is cluster-independent, and within each cluster it is independent of $(s_1,s_2)$. Hence, in the mixture, $\Cov(\phi_4,y)=\Ex[\phi_4y]-m_4\Ex[y]=m_4\Ex[y]-m_4\Ex[y]=0$, $\Cov(\phi_4,s_j)=\sum_c\omega_c\,m_4\,m_j^c-m_4\sum_c\omega_c m_j^c=0$, and
$\Cov(\phi_4,t)=\Ex[\tfrac12t^3]-m_4\Ex[t]=\sum_c\omega_c\bigl(m_4 m_t^c+m_t^c\bigr)-m_4\sum_c\omega_c m_t^c=\langle m_t\rangle=\tfrac{\pi\alpha h}{2}=0$, using $\tfrac12\Ex[t^3]=\tfrac12(m_t^3+3m_t)=m_4m_t+m_t$. Now let $(\bw',c')$ be the least-squares solution of the model without $\phi_4$ (the linear classifier) and $r=y-\bw'^\top(t,s_1,s_2)-c'$ its residual. The normal equations of that model give $\Ex[r]=0$ and $\Ex[r\,t]=\Ex[r\,s_j]=0$, hence $\Ex[r\phi_4]=\Cov(r,\phi_4)+\Ex[r]\Ex[\phi_4]=\Cov(y,\phi_4)-\sum_j w'_j\Cov(\phi_j,\phi_4)=0$. Thus $(\bw',0,c')$ satisfies all five normal equations \eqref{eq:normal}; by uniqueness it is $\btheta^\star$, so $\beta=0$ and $(\bw,c)=(\bw',c')$. (One can also read this off \eqref{eq:Aentries}--\eqref{eq:bentries}: at $h=0$ the fourth row of $\bA$ is $\tfrac12(g^2+1)$ times its fifth row except for the diagonal entry, and $b_4=\tfrac12(g^2+1)b_5$.)

\emph{(ii)} If $\tu$ is uniform on the unit sphere of $\R^d$ then, by rotational symmetry, $\Ex[\tu\tu^\top]=\mI/d$, so $\Ex[h^2]=\Ex[(\tu^\top\tv)^2]=\|\tv\|^2/d=T^2/d$ and likewise $\Ex[g^2]=S^2/d$. By Proposition~\ref{prop:beta}, $\beta=O(\pi h)$, so the curvature is $O(\pi d^{-1/2})$ and the model is the linear classifier up to that order. \hfill$\square$

\begin{corollary}[Exact weights and metrics in the linear regime]
\label{cor:linear}
In the linear regime the trained parameters are, in whitened coordinates,
\begin{equation}
  \begin{aligned}
  \tw &= (1-\pi)\,(\mI+\mC)^{-1}\Bigl(\tmu - \tfrac{\pi\alpha}{2}\tv\Bigr),
  \qquad
  c = -\pi - \tfrac{\pi\alpha}{2}\,\tv^\top\tw,\\
  \mC &= \tmu\tmu^\top + \tfrac{\pi\alpha}{2}\bigl(\tmu\tv^\top+\tv\tmu^\top\bigr) + \tfrac{\pi\alpha^2}{2}\bigl(1-\tfrac{\pi}{2}\bigr)\tv\tv^\top,
  \end{aligned}
  \label{eq:linw}
\end{equation}
where $\mC$ is the covariance of the cluster means, and
\begin{equation}
  \CACC = \tfrac12\left[\Phi\!\left(\frac{\tw^\top\tmu + c}{\|\tw\|}\right)+\Phi\!\left(\frac{\tw^\top\tmu - c}{\|\tw\|}\right)\right],
  \qquad
  \ASR = \Phi\!\left(-\frac{\tw^\top(\tmu+\alpha\tv) + c}{\|\tw\|}\right).
  \label{eq:linmetrics}
\end{equation}
For a hidden trigger ($\rho=0$), in the basis $(\tmu/S,\tv/T)$ and with $A=\alpha T$,
\begin{equation}
  \begin{aligned}
  w_\mu &= \frac{(1-\pi)\,S\,(1+\tfrac{\pi}{2}A^2)}{D},\qquad
  w_v = -\frac{(1-\pi)\,\tfrac{\pi}{2}A\,(1+2S^2)}{D},\qquad
  c = -\pi-\tfrac{\pi}{2}A\,w_v,\\
  D &= (1+S^2)\Bigl(1+\tfrac{\pi}{2}A^2\bigl(1-\tfrac{\pi}{2}\bigr)\Bigr)-\tfrac{\pi^2}{4}A^2S^2 .
  \end{aligned}
  \label{eq:linw0}
\end{equation}
\end{corollary}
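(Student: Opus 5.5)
The plan is to solve the normal equations for the linear model $f=\tw^\top\tx+c$ directly in whitened coordinates, without the $3$-frame. Stationarity in $c$ gives $c=\Ex[y]-\tw^\top\Ex[\tx]$. Substituting into stationarity in $\tw$ gives $\Cov(\tx)\,\tw=\Cov(\tx,y)$. I will compute both sides from the three-cluster mixture. The mixture mean is $\bar{\bme}=\langle\bme\rangle=\tfrac{\pi\alpha}{2}\tv$ and $\Ex[y]=-\pi$, which already gives $c=-\pi-\tfrac{\pi\alpha}{2}\tv^\top\tw$. The covariance is $\mI+\mC$, where $\mC=\sum_c\omega_c\bme_c\bme_c^\top-\bar{\bme}\bar{\bme}^\top$. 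Expanding with $\bme_\pm=\pm\tmu$ and $\bme_p=\tmu+\alpha\tv$ gives $\sum_c\omega_c\bme_c\bme_c^\top=\tmu\tmu^\top+\tfrac{\pi\alpha}{2}(\tmu\tv^\top+\tv\tmu^\top)+\tfrac{\pi\alpha^2}{2}\tv\tv^\top$. Subtracting $\tfrac{\pi^2\alpha^2}{4}\tv\tv^\top$ yields the stated $\mC$.

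For the right-hand side I use $\Cov(\tx,y)=\sum_c\omega_cy_c\bme_c-\Ex[y]\bar{\bme}$. The first term is $\tfrac{1-\pi}{2}\tmu+\tfrac12\tmu-\tfrac\pi2(\tmu+\alpha\tv)=(1-\pi)\tmu-\tfrac{\pi\alpha}{2}\tv$. Adding $\pi\cdot\tfrac{\pi\alpha}{2}\tv$ gives $(1-\pi)(\tmu-\tfrac{\pi\alpha}{2}\tv)$. Inverting $\mI+\mC\succ0$ then gives \eqref{eq:linw}. Uniqueness follows from strict convexity, as in Appendix~\ref{app:moments}.

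The metrics \eqref{eq:linmetrics} come from Theorem~\ref{thm:main} with $\beta=0$: on a cluster with mean $\bme$, $f$ is Gaussian with mean $\tw^\top\bme+c$ and standard deviation $\|\tw\|$. Taking $\bme=\tmu$, $-\tmu$ and $\tmu+\alpha\tv$, with the appropriate sign events, gives the three $\Phi$ expressions; this is also the collapse noted after \eqref{eq:boundary}.

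For $\rho=0$ I work in the orthonormal basis $(\tmu/S,\tv/T)$. Then $\mI+\mC$ restricted to this plane is the $2\times2$ matrix
$\bigl(\begin{smallmatrix}1+S^2 & \tfrac{\pi}{2}AS\\ \tfrac{\pi}{2}AS & 1+\tfrac{\pi}{2}A^2(1-\tfrac\pi2)\end{smallmatrix}\bigr)$.
The right-hand side is $(1-\pi)(S,-\tfrac{\pi}{2}A)$. On the orthogonal complement the matrix is the identity and the right-hand side vanishes, so $\tw$ lies in the plane. The determinant equals $D$, and Cramer's rule gives $w_\mu=(1-\pi)[S(1+\tfrac\pi2A^2(1-\tfrac\pi2))+\tfrac{\pi^2}{4}A^2S]/D=(1-\pi)S(1+\tfrac\pi2A^2)/D$. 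Similarly $w_v=(1-\pi)[-\tfrac\pi2A(1+S^2)-\tfrac\pi2AS^2]/D$, which is the stated form. Finally $c$ follows from $\tv^\top\tw=Tw_v$, so $\tfrac{\pi\alpha}{2}\tv^\top\tw=\tfrac\pi2Aw_v$.

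I expect no real obstacle here. The only delicate point is bookkeeping: the cancellation in $w_\mu$ (the $\pi^2A^2S/4$ terms combining into $1+\tfrac\pi2A^2$), and correctly subtracting the mean outer product in $\mC$. I would double-check these two steps against the $\pi=0$ limit $\tw=\tmu/(1+S^2)$ from Corollary~\ref{cor:general}(i).
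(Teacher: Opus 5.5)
Your proposal is correct and follows the paper's own proof essentially step by step. Both eliminate $c$ through the intercept equation, reduce to $(\mI+\mC)\tw=(1-\pi)(\tmu-\tfrac{\pi\alpha}{2}\tv)$ with $\mC$ the covariance of the cluster means, read off the metrics from the Gaussian law of $f$ on each cluster, and obtain the $\rho=0$ weights by inverting the $2\times2$ block of determinant $D$.
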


The poisoned cluster enters the normal equations only through its mean $\alpha\tv$, with weight $\pi$: the read-out tilts toward $-\tv$ by $w_v\propto\pi\alpha$, and a triggered test point, displaced by the same $\alpha\tv$, is pushed across the boundary by $\alpha w_v\propto\pi\alpha^2$.

\subsection{Proof of Corollary~\ref{cor:linear}}
The linear model $f=\tw^\top\tx+c$ has normal equations $\Ex[\tx(\tw^\top\tx+c-y)]=\bzero$ and $\Ex[\tw^\top\tx+c-y]=0$. Let $\bar{\bme}=\Ex[\tx]=\sum_c\omega_c\bme_c=\tfrac{\pi\alpha}{2}\tv$ (the $\tmu$ terms cancel: $\tfrac{1-\pi}{2}-\tfrac12+\tfrac\pi2=0$), $\Ex[y]=\tfrac{1-\pi}{2}-\tfrac12-\tfrac\pi2=-\pi$, $\Ex[\tx y]=\tfrac{1-\pi}{2}\tmu+\tfrac12\tmu-\tfrac\pi2(\tmu+\alpha\tv)=(1-\pi)\tmu-\tfrac{\pi\alpha}{2}\tv$, and $\Ex[\tx\tx^\top]=\mI+\bM$ with $\bM=\sum_c\omega_c\bme_c\bme_c^\top=\tmu\tmu^\top+\tfrac{\pi\alpha}{2}(\tmu\tv^\top+\tv\tmu^\top)+\tfrac{\pi\alpha^2}{2}\tv\tv^\top$. The second normal equation gives $c=-\pi-\bar{\bme}^\top\tw$; substituting in the first, $(\mI+\bM-\bar{\bme}\bar{\bme}^\top)\tw=\Ex[\tx y]+\pi\bar{\bme}=(1-\pi)\tmu-\tfrac{\pi\alpha}{2}(1-\pi)\tv$, and $\bM-\bar{\bme}\bar{\bme}^\top=\mC$ as stated, since $\bar{\bme}\bar{\bme}^\top=\tfrac{\pi^2\alpha^2}{4}\tv\tv^\top$. This is \eqref{eq:linw}. On a cluster with mean $\bme$, $f\sim\mathcal N(\tw^\top\bme+c,\|\tw\|^2)$, which gives \eqref{eq:linmetrics}. For $\rho=0$, in the orthonormal basis $(\tmu/S,\tv/T)$ and with $A=\alpha T$, $\mC=\bigl(\begin{smallmatrix}S^2&\pi AS/2\\ \pi AS/2&\pi A^2(1-\pi/2)/2\end{smallmatrix}\bigr)$, $\tmu-\tfrac{\pi\alpha}{2}\tv=(S,-\pi A/2)$ and $\bar{\bme}=(0,\pi A/2)$; inverting the $2\times2$ matrix $\mI+\mC$, whose determinant is $D$, gives \eqref{eq:linw0}: the numerator of $w_\mu$ is $S\bigl(1+\tfrac{\pi A^2}{2}(1-\tfrac\pi2)\bigr)+\tfrac{\pi A}{2}\cdot\tfrac{\pi AS}{2}=S(1+\tfrac{\pi}{2}A^2)$ and that of $w_v$ is $-\tfrac{\pi A}{2}(1+S^2)-\tfrac{\pi AS}{2}S=-\tfrac{\pi A}{2}(1+2S^2)$. \hfill$\square$

\subsection{Proof of Theorem~\ref{thm:linear}}
\label{app:linthm}
Throughout $\rho=0$ and we use \eqref{eq:linw0}. The triggered mean margin is $M_p=w_\mu S+w_vA+c$ and the clean margins are $M_\pm=\pm w_\mu S+c$; $\ASR=\Phi(-M_p/\|\bw\|)$ and $\CACC=\tfrac12[\Phi(M_+/\|\bw\|)+\Phi(-M_-/\|\bw\|)]$ with $\|\bw\|^2=w_\mu^2+w_v^2$.

\emph{(a)} Since $D=(1+S^2)(1+\tfrac\pi2A^2)+O(\pi^2)$, we get $w_\mu=\tfrac{(1-\pi)S}{1+S^2}+O(\pi^2)$, $w_v=-\tfrac{\pi A(1+2S^2)}{2(1+S^2)}+O(\pi^2)$, $c=-\pi+O(\pi^2)$ and $\|\bw\|=w_\mu+O(\pi^2)$. Hence
\[
\frac{M_p}{\|\bw\|}=S+\frac{w_vA+c}{w_\mu}+O(\pi^2)=S-\pi\Bigl[\frac{(1+2S^2)A^2}{2S}+\frac{1+S^2}{S}\Bigr]+O(\pi^2),
\]
and $\Phi(-M_p/\|\bw\|)=\Phi(-S)+\pi\varphi(S)[\cdots]+O(\pi^2)$. For the clean classes, $M_\pm/\|\bw\|=\pm S+c/w_\mu+O(\pi^2)$, and $\Phi(S+c/w_\mu)+\Phi(S-c/w_\mu)=2\Phi(S)+O(\pi^2)$.

\emph{(b)} Put $\pi=e/A^2$ and let $A\to\infty$ (so $\pi\to0$). Then $\pi A^2=e$, $\pi A=e/A\to0$, $\pi^2A^2=\pi e\to0$, and $D\to(1+S^2)(1+\tfrac e2)$. Therefore $w_\mu\to S/(1+S^2)$, $w_v=O(\pi A)\to0$, $w_vA\to-\tfrac{e(1+2S^2)}{2(1+S^2)(1+e/2)}=-\tfrac{(1+2S^2)}{(1+S^2)}\tfrac{e}{e+2}$, $c=-\pi-\tfrac{\pi A}{2}w_v\to0$, $\|\bw\|\to S/(1+S^2)$. Hence $M_p/\|\bw\|\to S-\tfrac{1+2S^2}{S}\tfrac{e}{e+2}$ and $M_\pm/\|\bw\|\to\pm S$, which is \eqref{eq:linlaw} and $\CACC\to\Phi(S)$.

\emph{(c)} $e\mapsto e/(e+2)$ increases from $0$ to $1$, so $\ASR_{\rm lin}$ increases from $\Phi(-S)$ to $\Phi\bigl(-S+\tfrac{1+2S^2}{S}\bigr)=\Phi\bigl(\tfrac{1+S^2}{S}\bigr)$.

\emph{(d)} Setting $\ASR_{\rm lin}(e)=a$ gives $\tfrac{1+2S^2}{S}\tfrac{e}{e+2}=S+\Phi^{-1}(a)$, i.e.\ $\tfrac{e}{e+2}=\kappa_a$, which has a solution $e_a=\tfrac{2\kappa_a}{1-\kappa_a}\ge0$ iff $0\le\kappa_a<1$, i.e.\ iff $\Phi(-S)\le a<\Phi\bigl(\tfrac{1+S^2}{S}\bigr)$. Then $\alpha=\sqrt{e_a/\pi}/T$. \hfill$\square$

\section{Proofs for the feature-learning regime (Section~\ref{sec:feature})}
\label{app:fl}

\subsection{Proof of Proposition~\ref{prop:beta} (first-order expansion)}
\label{app:beta}
Write $\bA=\bA_0+\pi\bA_1$ and $\bb=\bb_0+\pi\bb_1$ with $\bA_0,\bb_0$ the $\pi=0$ parts of \eqref{eq:Aentries}--\eqref{eq:bentries}; all entries are affine in $\pi$ so this is exact. Then $\btheta^\star=\btheta_0+\pi\btheta_1+O(\pi^2)$ with $\btheta_0=\bA_0^{-1}\bb_0$ as in the proof of Corollary~\ref{cor:general}(i) and
\begin{equation}
\btheta_1=\bA_0^{-1}\bigl(\bb_1-\bA_1\btheta_0\bigr)=:\bA_0^{-1}\br .
\label{eq:theta1def}
\end{equation}
Because $\bA_0$ is block-diagonal between the linear coordinates and the pair $(\tfrac12t^2,1)$, the curvature is $\beta_1=[\bQ_0^{-1}(r_4,r_5)^\top]_1$ with $\bQ_0$ the quadratic block; since $\det\bQ_0=\tfrac14(g^4+6g^2+3)-\tfrac14(g^2+1)^2=\tfrac{2g^2+1}{2}$,
\begin{equation}
\beta_1=\frac{2}{2g^2+1}\Bigl(r_4-\frac{g^2+1}{2}\,r_5\Bigr).
\label{eq:beta1r}
\end{equation}
It remains to compute $r_4$ and $r_5$. Since $\btheta_0$ has only the two components $(g,m_\perp)/(1+S^2)$, $(\bA_1\btheta_0)_k=\bigl(A^{(1)}_{k1}g+A^{(1)}_{k2}m_\perp\bigr)/(1+S^2)$ where $A^{(1)}_{kj}$ is the coefficient of $\pi$ in $A_{kj}$. From \eqref{eq:Aentries}, $A^{(1)}_{51}=\tfrac{\alpha h}{2}$, $A^{(1)}_{52}=\tfrac{\alpha v_\parallel}{2}$, so using $gh+m_\perp v_\parallel=ST\rho$,
\[
r_5=-1-\frac{\alpha(gh+m_\perp v_\parallel)}{2(1+S^2)}=-1-\frac{\alpha ST\rho}{2(1+S^2)} .
\]
Similarly $A^{(1)}_{41}=\tfrac{\alpha h}{4}(\alpha^2h^2+3\alpha gh+3g^2+3)$ and $A^{(1)}_{42}=\tfrac{\alpha}{4}(\alpha^2h^2v_\parallel+2\alpha ghv_\parallel+\alpha h^2m_\perp+g^2v_\parallel+2ghm_\perp+v_\parallel)$; multiplying by $g$ and $m_\perp$ respectively, adding, and replacing $m_\perp v_\parallel=ST\rho-gh$ and $m_\perp^2=S^2-g^2$, all terms in $g^2h^2$, $g^3h$ and $gh$ cancel and one finds
\[
A^{(1)}_{41}g+A^{(1)}_{42}m_\perp=\frac{\alpha}{4}\Bigl[ST\rho\bigl(\alpha^2h^2+2\alpha gh+g^2+1\bigr)+\alpha h^2S^2+2gh(S^2+1)\Bigr],
\]
so that, with $b^{(1)}_4=-\tfrac14(\alpha^2h^2+2\alpha gh+2g^2+2)$,
\[
r_4=-\frac{\alpha^2h^2+2\alpha gh+2g^2+2}{4}-\frac{\alpha\bigl[ST\rho(\alpha^2h^2+2\alpha gh+g^2+1)+\alpha h^2S^2+2gh(S^2+1)\bigr]}{4(1+S^2)} .
\]
Substituting in \eqref{eq:beta1r}: the constant $-\tfrac{2g^2+2}{4}$ in $r_4$ cancels against $-\tfrac{g^2+1}{2}\cdot(-1)$ from $r_5$, the terms $\alpha ST\rho(g^2+1)$ cancel between $r_4$ and $-\tfrac{g^2+1}{2}r_5$, and collecting the rest over the denominator $4(1+S^2)$,
\[
r_4-\tfrac{g^2+1}{2}r_5=-\frac{\alpha h\bigl[ST\rho\,h\,\alpha^2+\bigl((1+2S^2)h+2ST\rho\,g\bigr)\alpha+4g(1+S^2)\bigr]}{4(1+S^2)} ,
\]
which gives \eqref{eq:beta}. The remaining components of $\btheta_1$ are finite, so $\tw=\tmu/(1+S^2)+O(\pi)$ and $c=O(\pi)$. In the feature-learning regime with $\rho=0$ one has $g=0$, $h=T$, and \eqref{eq:beta} becomes $\beta=-\pi\tfrac{(1+2S^2)}{2(1+S^2)}\alpha^2T^2+O(\pi^2)$. (Expression \eqref{eq:beta} was also verified symbolically against $\bA^{-1}\bb$ at random rational points.) \hfill$\square$

\begin{corollary}[Exact weights in the feature-learning regime]
\label{cor:flweights}
In the feature-learning regime with a hidden trigger ($\bu=\bu^\star$, $\rho=0$), in the frame $(\tv/T,\tmu/S)$ of feature and residual signal, with $A=\alpha T$,
\begin{equation}
  \begin{aligned}
  w_\mu &= \frac{2S(1-\pi)\bigl[4+\pi A^2(A^2+6)-2\pi^2A^2\bigr]}{\Delta},
  \qquad
  \beta = -\frac{4\pi A^2(1+2S^2)(1-\pi)^2}{\Delta},\\[2pt]
  w_v &= -\frac{2\pi A(1-\pi)(1+2S^2)(\pi A^2+2)}{\Delta},\\[2pt]
  c &= \frac{\pi\bigl[2A^2(1+2S^2)-8(1+S^2)-\pi A^4-2\pi A^2(8S^2+7)+2\pi^2A^2(4S^2+3)\bigr]}{\Delta},
  \end{aligned}
  \label{eq:flweights}
\end{equation}
where
\begin{equation}
  \Delta=16\det\bA=8(1+S^2)+2\pi A^2(1+S^2)(A^2+6)-\pi^2A^2\bigl[(1+2S^2)A^2+2(3+4S^2)\bigr].
  \label{eq:Delta}
\end{equation}
\end{corollary}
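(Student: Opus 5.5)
We specialize the normal equations \eqref{eq:normal} with the entries \eqref{eq:Aentries}--\eqref{eq:bentries} to the feature-learning regime with a hidden trigger, and then solve the resulting small linear system by hand. With $\bu=\bu^\star$ and $\rho=0$ we have $h=T$, $g=S\rho=0$, $m_\perp=S$, $v_\parallel=(ST\rho-gh)/m_\perp=0$ and $v_\perp=0$. By Lemma~\ref{lem:reduce} (in-plane case), $w_2=0$ and the system is the $4\times4$ one in $(w_t,w_1,\beta,c)$. Here $w_t$ is the coefficient on the feature $t=\tv^\top\tx/T$, denoted $w_v$ in the statement, and $w_1$ is the coefficient on $\tmu^\top\tx/S$, denoted $w_\mu$.

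\textbf{Step 1: substitute the invariants.} With $g=v_\parallel=0$ and $\alpha h=A$, the entries \eqref{eq:Aentries} simplify to
\[
A_{11}=1+\tfrac\pi2A^2,\quad A_{12}=\tfrac\pi2AS,\quad A_{14}=\tfrac{\pi A}{4}(A^2+3),\quad A_{15}=\tfrac{\pi A}{2},\quad A_{22}=1+S^2,
\]
\[
A_{24}=\tfrac{\pi}{4}A^2S,\quad A_{25}=0,\quad A_{44}=\tfrac34+\tfrac\pi8(A^4+6A^2),\quad A_{45}=\tfrac12+\tfrac\pi4A^2,\quad A_{55}=1.
\]
The right-hand side \eqref{eq:bentries} becomes
\[
\bb=\bigl(-\tfrac{\pi A}{2},\ S(1-\pi),\ -\tfrac\pi4(A^2+2),\ -\pi\bigr).
\]
Every entry is now a polynomial in $(\pi,A,S)$, and $T$ appears only through $A$.

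\textbf{Step 2: reduce the system.} Rather than apply Cramer's rule to the full $4\times4$ matrix, I first eliminate the simple coordinates. The last equation gives
\[
c=-\pi-\tfrac{\pi A}{2}w_t-\bigl(\tfrac12+\tfrac\pi4A^2\bigr)\beta .
\]
The row for $s_1$ gives
\[
w_1=\Bigl(S(1-\pi)-\tfrac\pi2AS\,w_t-\tfrac\pi4A^2S\,\beta\Bigr)\Big/(1+S^2).
\]
Substituting both into the rows for $t$ and $\tfrac12t^2$ leaves a $2\times2$ system in $(w_t,\beta)$, i.e.\ a Schur complement. Clearing the factor $(1+S^2)$ should make its determinant, up to a constant, equal to $\det\bA/\bigl[(1+S^2)\cdot\text{const}\bigr]$. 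From this I will verify that $16\det\bA$ equals the $\Delta$ of \eqref{eq:Delta}.

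As a cheaper check of $\Delta$, I will expand $\det\bA$ in powers of $\pi$. At $\pi=0$ it is $(1+S^2)\det\bQ_0=(1+S^2)/2$, giving the leading $8(1+S^2)$. The $\pi^1$ and $\pi^2$ coefficients follow from the structure of the matrix, and $\pi^3,\pi^4$ terms must cancel, since $\Delta$ is quadratic in $\pi$. That cancellation is itself a useful consistency check. Cramer's rule on the reduced system then gives $w_t$, $\beta$, $w_1$, and finally $c$ by back-substitution.

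\textbf{Step 3: checks, and the main obstacle.} I will check the results against earlier statements. At $\pi=0$ we should recover $w_\mu=S/(1+S^2)$ and $\beta=c=w_v=0$, as in Corollary~\ref{cor:general}(i). To first order, $\beta\approx-4\pi A^2(1+2S^2)/\bigl(8(1+S^2)\bigr)$ should match Proposition~\ref{prop:beta}.

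The main obstacle is the algebra of factoring the numerators: showing that $\beta$'s numerator collapses to $-4\pi A^2(1+2S^2)(1-\pi)^2$, and that $w_v$ carries the factor $(\pi A^2+2)$. I would attack this by first checking that $(1-\pi)$ divides each of these numerators, by evaluating them at $\pi=1$ where the clean source class vanishes. I would then match coefficients in $\pi$, and finally confirm the closed forms symbolically at a few rational points, as the paper does for \eqref{eq:beta}.
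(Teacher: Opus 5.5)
Your proposal is correct and is essentially the paper's argument. You specialize \eqref{eq:Aentries}--\eqref{eq:bentries} to $g=v_\parallel=v_\perp=0$, $h=T$, $m_\perp=S$, and your entries and right-hand side coincide with the paper's $4\times4$ system; the only difference is that the paper applies Cramer's rule to the full system and certifies it by symbolic back-substitution, whereas you first eliminate $c$ and $w_\mu$ to a $2\times2$ Schur complement in $(w_v,\beta)$, which is just a more hand-friendly way of organizing the same computation. One small correction to your determinant remark: the eliminated pivots are $A_{55}=1$ and $A_{22}=1+S^2$ (untouched by the first elimination since $A_{25}=0$), so the $2\times2$ determinant is exactly $\det\bA/(1+S^2)$, and it becomes $(1+S^2)\det\bA$ once you multiply both rows by $1+S^2$.
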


As $A\to\infty$, $\Delta\sim\pi\bigl[2(1+S^2)-\pi(1+2S^2)\bigr]A^4$ while the numerator of $\beta$ grows only as $A^2$, so $\beta=O(A^{-2})$: the detector saturates. When $\pi\to0$ and $A\to\infty$, $\Delta/8(1+S^2)\to1+\tfrac{\pi A^4}{4}$, so $\pi A^4$ is the only combination of $\pi$ and $A$ that stays finite.

\subsection{Proof of Corollary~\ref{cor:flweights} (exact weights, $\rho=0$)}
\label{app:flweights}
In the feature-learning regime $\tu=\tv/T$, so $h=T$, $g=S\rho$, $v_\parallel=v_\perp=0$, $m_\perp=S\sqrt{1-\rho^2}$, and $\alpha$ enters only through $A=\alpha h=\alpha T$. With $\rho=0$ ($g=0$, $m_\perp=S$), the $4\times4$ system of \eqref{eq:Aentries}--\eqref{eq:bentries} in the unknowns $(w_v,w_\mu,\beta,c)$ (feature first) is
\[
\bA=\begin{pmatrix}
1+\tfrac{\pi}{2}A^2 & \tfrac{\pi}{2}AS & \tfrac{\pi}{4}A(A^2+3) & \tfrac{\pi}{2}A\\[2pt]
\cdot & 1+S^2 & \tfrac{\pi}{4}A^2S & 0\\[2pt]
\cdot & \cdot & \tfrac34+\tfrac{\pi}{8}A^2(A^2+6) & \tfrac12+\tfrac{\pi}{4}A^2\\[2pt]
\cdot & \cdot & \cdot & 1
\end{pmatrix},
\qquad
\bb=\begin{pmatrix}-\tfrac{\pi}{2}A\\ S(1-\pi)\\ -\tfrac{\pi}{4}(A^2+2)\\ -\pi\end{pmatrix}.
\]
Solving by Cramer's rule gives \eqref{eq:flweights}; the identity $16\det\bA=\Delta$ and the four quotients were verified by exact symbolic computation (substituting \eqref{eq:flweights} back into $\bA\btheta=\bb$ yields zero identically). The leading behaviour as $A\to\infty$ follows from the highest powers: $\Delta=\pi\bigl[2(1+S^2)-\pi(1+2S^2)\bigr]A^4+O(A^2)$, while the numerators of $\beta$, $w_vA$ and $c$ are $O(A^2)$, $O(A^4)$ and $O(A^4)$. \hfill$\square$

For the ceiling at general $\rho$ we also record the exact weights of the trigger-aligned frame $g=S\rho$, $m_\perp=S\bar\rho$ ($\bar\rho=\sqrt{1-\rho^2}$), $h=T$, $v_\parallel=0$, again in terms of $A=\alpha T$:
\begin{equation}
w_v=\frac{2(1-\pi)}{\Delta}\,P_v,\qquad
w_\mu=\frac{2S\bar\rho(1-\pi)}{\Delta}\,P_\mu,\qquad
\beta=-\frac{4A\pi(1-\pi)}{\Delta}\,P_\beta,\qquad
c=-\frac{\pi}{\Delta}\,P_c,
\label{eq:alignedweights}
\end{equation}
where $\Delta=16\det\bA$ and, grouped by powers of $A$,
\begingroup\small
\begin{align*}
\Delta=\ &8(1+S^2)(1+2S^2\rho^2)
+8S\pi\rho\,(2S^2\rho^2+2S^2+3)\,A
+8S\pi\rho\,(1+S^2-S^2\pi)\,A^3\\
&+\pi\bigl(2(1+S^2)-\pi(1+2S^2)\bigr)\,A^4\\
&+\bigl[\pi(12+12S^2-8S^4\rho^4+16S^4\rho^2+12S^2\rho^2)
+\pi^2(8S^4\rho^4-16S^4\rho^2+8S^2\rho^2-8S^2-6)\bigr]A^2,\\
P_\beta=\ &4S\rho(1+S^2)
+\bigl[(1-\pi)(1+2S^2)+2(1+\pi)S^2\rho^2\bigr]A
+S\rho\,A^2,\\
P_v=\ &4S\rho(2S^2\rho^2+1)
+2\pi\bigl(4S^4\rho^4-4S^4\rho^2+4S^2\rho^2-2S^2-1\bigr)A\\
&+2S\pi\rho\bigl(2S^2\pi\rho^2-2S^2\pi+2S^2\rho^2-\pi+2\bigr)A^2
+\pi\bigl(2S^2\pi\rho^2-2S^2\pi+4S^2\rho^2-\pi\bigr)A^3
+S\pi\rho\,A^4,\\
P_\mu=\ &4(2S^2\rho^2+1)+4S\pi\rho\bigl(2S^2\rho^2+3\bigr)A
+2\pi\bigl(2S^2\pi\rho^2+4S^2\rho^2-\pi+3\bigr)A^2\\
&+2S\pi\rho(\pi+2)A^3+\pi A^4,\\
P_c=\ &8(1+S^2)(1+2S^2\rho^2)\\
&+4S\rho\bigl(2S^4\pi\rho^2-2S^4\rho^2+4S^2\pi\rho^2+6S^2\pi-2S^2+7\pi-1\bigr)A\\
&+2\bigl(2S^4\pi^2\rho^4-2S^4\pi^2\rho^2+4S^4\pi\rho^2-2S^4\rho^4-2S^4\rho^2
+5S^2\pi^2\rho^2-4S^2\pi^2\\
&\qquad\quad+8S^2\pi\rho^2+8S^2\pi-3S^2\rho^2-2S^2-3\pi^2+7\pi-1\bigr)A^2\\
&+2S\rho\bigl(S^2\pi\rho^2-S^2\rho^2+\pi^2+4\pi-1\bigr)A^3+\pi\,A^4 .
\end{align*}
\endgroup
These reduce to \eqref{eq:flweights} at $\rho=0$ and were verified symbolically against the $4\times4$ solve.

\subsection{Proof of Theorem~\ref{thm:fl}}
\label{app:flthm}
In the frame of Corollary~\ref{cor:flweights} the cluster means are $\bme_\pm=(0,\pm S)$ and $\bme_p=(A,S)$, so by Theorem~\ref{thm:main}, with $\sigma=|w_\mu|$,
\[
A_0^\pm=\pm w_\mu S+c,\quad b^\pm=w_v,\qquad
A_0^p=w_\mu S+w_vA+\tfrac\beta2A^2+c,\quad b^p=w_v+\beta A .
\]

\emph{(a) Small poison.} Expanding \eqref{eq:flweights} to first order in $\pi$ at fixed $A$ (equivalently, applying Appendix~\ref{app:beta} with $g=0$, $h=T$, $m_\perp=S$, $v_\parallel=0$),
\begin{equation}
\begin{aligned}
w_\mu&=\frac{S(1-\pi)}{1+S^2}+O(\pi^2),\qquad
w_v=-\frac{\pi A(1+2S^2)}{2(1+S^2)}+O(\pi^2),\\
\beta&=-\frac{\pi A^2(1+2S^2)}{2(1+S^2)}+O(\pi^2),\qquad
c=\frac{\pi\bigl[A^2(1+2S^2)-4(1+S^2)\bigr]}{4(1+S^2)}+O(\pi^2).
\end{aligned}
\label{eq:flfirst}
\end{equation}
Since $w_v,\beta,c=O(\pi)$, the argument of $\Phi$ in the $\ASR$ of \eqref{eq:thm} is $-S-\delta(z)+O(\pi^2)$ with $\delta(z)=\bigl[w_vA+\tfrac\beta2A^2+c+b^pz+\tfrac\beta2z^2\bigr]/w_\mu$, and $\Ex_z\Phi(-S-\delta(z))=\Phi(-S)-\varphi(S)\Ex_z[\delta(z)]+O(\pi^2)$ with $\Ex_z[\delta]=\bigl[w_vA+\tfrac\beta2A^2+\tfrac\beta2+c\bigr]/w_\mu$ (because $\Ex z=0$, $\Ex z^2=1$). Substituting \eqref{eq:flfirst} over the common denominator $4(1+S^2)$,
\[
w_vA+\tfrac\beta2A^2+\tfrac\beta2+c
=-\pi\,\frac{(1+2S^2)\,A^2(A^2+2)+4(1+S^2)}{4(1+S^2)},
\]
and with $1/w_\mu=(1+S^2)/S+O(\pi)$ this is \eqref{eq:asr_expand}. The identification of the extra term relative to Theorem~\ref{thm:linear}(a) is immediate: $\tfrac{(1+2S^2)A^2(A^2+2)}{4S}=\tfrac{(1+2S^2)A^2}{2S}+\tfrac{(1+2S^2)A^4}{4S}$. The clean accuracy statement is Corollary~\ref{cor:general}(ii); explicitly, the two clean shifts are $\mp\varphi(S)\bigl(\tfrac\beta2+c\bigr)/w_\mu$ and $\tfrac\beta2+c=-\pi+O(\pi^2)$ by \eqref{eq:flfirst}.

\emph{(b) Stealth limit.} Put $\pi=\varrho/A^4$ and let $A\to\infty$. Then $\pi A^4=\varrho$, while $\pi A^2=\varrho/A^2\to0$ and $\pi^2A^4=\pi\varrho\to0$. Term by term in \eqref{eq:flweights},
\begin{gather*}
\Delta\to 2(1+S^2)(4+\varrho),\qquad
w_\mu\to\frac{2S(4+\varrho)}{2(1+S^2)(4+\varrho)}=\frac{S}{1+S^2},\\
\frac\beta2A^2=-\frac{2\pi A^4(1+2S^2)(1-\pi)^2}{\Delta}\to-\frac{1+2S^2}{1+S^2}\cdot\frac{\varrho}{4+\varrho}\,,
\end{gather*}

while $w_vA=-2(1+2S^2)(1-\pi)\pi A^2(\pi A^2+2)/\Delta\to0$, $c\to0$ (every term of $\pi P_c$ is $O(\pi\varrho)$ or $O(\pi A^2)$), $\beta\to0$ and $b^p=w_v+\beta A\to0$. Hence in the triggered integral of \eqref{eq:thm} the $z$-dependence disappears and the integral collapses to a single $\Phi$ with argument $-A_0^p/w_\mu\to-S+\tfrac{1+2S^2}{S}\tfrac{\varrho}{4+\varrho}$, which is \eqref{eq:fllaw}. For the clean clusters $A_0^\pm/w_\mu\to\pm S$ and $b^\pm,\beta\to0$, so $\CACC\to\Phi(S)$. (The convergence of the $z$-integrals is dominated: for large $A$ the integrands are uniformly bounded and converge pointwise.)

\emph{(c) Ceiling.} Fix $\pi\in(0,1)$ and let $A\to\infty$. We first treat $\rho=0$ using \eqref{eq:flweights}. Let $D_\infty:=2(1+S^2)-\pi(1+2S^2)=2(1+S^2)(1-\pi)+\pi>0$, so that $\Delta=\pi D_\infty A^4+O(A^2)$. Then
\begin{gather*}
w_\mu\to\frac{2S(1-\pi)}{D_\infty},\qquad
w_vA\to-\frac{2\pi(1-\pi)(1+2S^2)}{D_\infty},\\
\frac\beta2A^2\to-\frac{2(1+2S^2)(1-\pi)^2}{D_\infty},\qquad
c\to-\frac{\pi}{D_\infty},
\end{gather*}
and $b^p=w_v+\beta A=O(A^{-1})\to0$, $\beta\to0$. Summing, the numerator of the limit of $A_0^p$ is
\begin{align*}
&2S^2(1-\pi)-2\pi(1-\pi)(1+2S^2)-2(1+2S^2)(1-\pi)^2-\pi\\
&\qquad=(1-\pi)\bigl[2S^2-2(1+2S^2)\bigr]-\pi
=-2(1+S^2)(1-\pi)-\pi=-D_\infty ,
\end{align*}
so $A_0^p\to-1$. Since $\beta\to0$ and $b^p\to0$, the triggered statistic becomes linear-Gaussian and $\ASR\to\Phi\bigl(-A_0^p/\sqrt{w_\mu^2+(b^p)^2}\bigr)\to\Phi\bigl(1/w_{\mu,\infty}\bigr)=\Phi\bigl(D_\infty/(2S(1-\pi))\bigr)$, which is Theorem~\ref{thm:fl}(c) because $D_\infty/(2S(1-\pi))=\tfrac{1+S^2}{S}+\tfrac{\pi}{2S(1-\pi)}$. For general $\rho$ the same computation uses \eqref{eq:alignedweights}. The leading coefficients give $w_\mu\to2S\bar\rho(1-\pi)/D_\infty$, $w_v\to2S\rho(1-\pi)/D_\infty$, $\beta A\to-4S\rho(1-\pi)/D_\infty$, hence $b^p=w_v+\beta(S\rho+A)\to-2S\rho(1-\pi)/D_\infty$ and $w_\mu^2+(b^p)^2\to4S^2(1-\pi)^2(\bar\rho^2+\rho^2)/D_\infty^2=\bigl(2S(1-\pi)/D_\infty\bigr)^2$, independent of $\rho$. In $A_0^p=w_v(S\rho+A)+w_\mu S\bar\rho+c+\tfrac\beta2(S\rho+A)^2$ the $O(A)$ terms $w_vA+\tfrac12(\beta A)A$ cancel at leading order, and collecting the $O(1)$ terms from the two leading coefficients of each polynomial (the $A^4$ and $A^3$ coefficients of $\Delta$ and $P_v$, the $A^2$ and $A^1$ coefficients of $P_\beta$, and the $A^4$ coefficient of $P_c$), every $\rho$-dependent term cancels and one again finds $A_0^p\to-1$. Hence Theorem~\ref{thm:fl}(c) holds for all $\rho$; we also confirmed it by exact symbolic limits at several rational $(S,\rho,\pi)$ and numerically to six decimals. As $\pi\to0$ the correction vanishes and the ceiling tends to $\Phi\bigl(\tfrac{1+S^2}{S}\bigr)$.

\emph{(d) Budget curve.} $\varrho\mapsto\varrho/(4+\varrho)$ increases from $0$ to $1$, so $\ASR_{\rm fl}(\varrho)=a$ has a solution iff $\kappa_a\in[0,1)$, namely $\tfrac{\varrho}{4+\varrho}=\kappa_a$, i.e.\ $\varrho_a=\tfrac{4\kappa_a}{1-\kappa_a}$, and then $\alpha=(\varrho_a/\pi)^{1/4}/T$. The condition $\kappa_a<1$ is again $a<\Phi\bigl(\tfrac{1+S^2}{S}\bigr)$. \hfill$\square$

\subsection{Learning the feature direction end to end}
\label{app:endtoend}
Definition~\ref{def:fl} fixes $\bu=\bu^\star$. To check that this is the direction a network that also trains $\bu$ would select, note that for any fixed $\bu$ the optimal read-out is $\bA^{-1}\bb$, so end-to-end training of $(\bw,\beta,c,\bu)$ minimises the profile loss $\mathcal L^\star(\bu)=\tfrac12\bigl(1-\bb^\top\bA^{-1}\bb\bigr)$ over $\bu^\top\bSig\bu=1$. In the isotropic setting $S=1.5$, $\rho=0$, $\pi=0.1$ we minimised $\mathcal L^\star$ over in-plane directions and, independently, ran Adam on all of $(\bw,\beta,c,\bu)$ for the population loss: both converge to the same direction, whose overlap $h/T$ with the whitened trigger is $0.95$ at $A=4$ and decreases to $0.64$ at $A=16$. The direction that would maximise the ASR at fixed $A$ is closer to the signal ($h^\star/T$ from $0.81$ to $0.38$ over the same range), but because $\ASR$ is flat near its maximum in $h$ the ASR obtained by the learned direction is within $0.03$ of the maximal one. End-to-end feature learning therefore does not target the attack, yet lands in the feature-learning region $h\gg T\pi^{1/4}$ of Corollary~\ref{cor:channels} and is nearly ASR-optimal as a by-product; $\bu^\star$ is its weak-trigger limit.

\section{Two channels: derivation of Corollary~\ref{cor:channels}}
\label{app:channels}
\begin{figure}[h]
  \centering
  \includegraphics[width=0.95\linewidth]{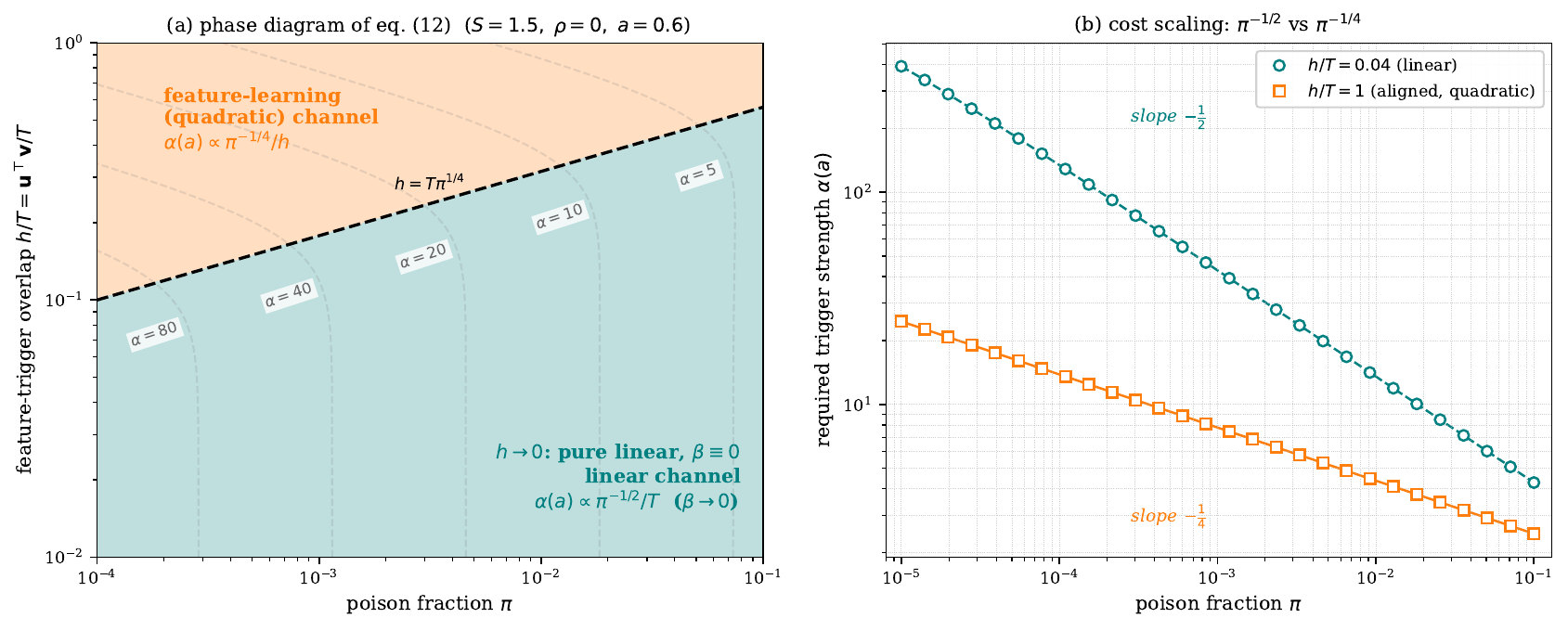}
  \caption{\textbf{Two backdoor channels.} \emph{\textbf{(a)}~The $(\pi,\,h/T)$ plane splits at the crossover $h = T\pi^{1/4}$ (dashed blue) into the feature-learning channel, $\alpha\propto\pi^{-1/4}$, and the linear channel, $\alpha\propto\pi^{-1/2}$ (at $h=0$, $\beta\equiv 0$). Grey contours are lines of constant exact required trigger strength from Theorem~\ref{thm:main}. \textbf{(b)}~Log-log plot of $\alpha(\pi;a)$ for the two extreme alignments, confirming the two exponents. Parameters: $S=1.5$, $\rho=0$, target ASR $a=0.6$.}}
  \label{fig:phase}
\end{figure}

Let $\rho=0$ and let $\bu$ be in-plane with overlap $0\le h\le T$. In the frame of Lemma~\ref{lem:reduce} the trigger is $\tv=h\be_1+v_\parallel\be_2$ with $v_\parallel^2=T^2-h^2$ (the third coordinate is absent in-plane); in this appendix $v_\perp:=|v_\parallel|=\sqrt{T^2-h^2}$ denotes the magnitude of the trigger component orthogonal to the feature, as in Corollary~\ref{cor:channels}. The trigger displaces the poisoned cluster by $\alpha h$ along the feature and by $\alpha v_\perp$ orthogonally to it. By Proposition~\ref{prop:beta}, $\beta\propto\pi h$ and the quadratic detector acts on the feature displacement exactly as in Theorem~\ref{thm:fl} with $A$ replaced by $\alpha h$: its stealth invariant is $\pi(\alpha h)^4$ and it moves the margin argument by $\tfrac{1+2S^2}{S}\tfrac{\varrho'}{4+\varrho'}$, $\varrho'=\pi\alpha^4h^4$. The orthogonal displacement is invisible to the detector and acts through the linear read-out exactly as in Theorem~\ref{thm:linear} with $A$ replaced by $\alpha v_\perp$: its invariant is $e'=\pi\alpha^2v_\perp^2$ and it moves the argument by $\tfrac{1+2S^2}{S}\tfrac{e'}{e'+2}$. Both channels push the argument from $-S$ toward the common ceiling $\tfrac{1+S^2}{S}$ with the same total budget. Writing both saturating fractions as $G/(4+G)$ with $G=2e'$ and $G=\varrho'$ respectively, and adding the two contributions to a single generalised energy $G(\alpha)=2\pi\alpha^2v_\perp^2+\pi\alpha^4h^4$, gives the approximate law $\ASR\simeq\Phi\bigl(-S+\tfrac{1+2S^2}{S}\tfrac{G}{4+G}\bigr)$. Setting it equal to $a$ gives $G=\varrho_a=\tfrac{4\kappa_a}{1-\kappa_a}$, a quadratic in $\alpha^2$, $\pi h^4\alpha^4+2\pi v_\perp^2\alpha^2-\varrho_a=0$, whose positive root is
\begin{equation}
  \alpha(\pi;a,h)
  = \left(\frac{\sqrt{v_\perp^4 + \varrho_a\,h^4/\pi} - v_\perp^2}{h^4}\right)^{1/2},
  \qquad
  v_\perp^2=T^2-h^2,\qquad
  \varrho_a = \frac{4\kappa_a}{1-\kappa_a}.
  \label{eq:unified}
\end{equation} At $h=0$ the equation reduces to $2\pi\alpha^2T^2=\varrho_a$, i.e.\ $e=\varrho_a/2=e_a$, which is \eqref{eq:lincost}; at $h=T$ it reduces to $\pi\alpha^4T^4=\varrho_a$, which is \eqref{eq:cost}. The two terms of $G$ balance at $\alpha^2=2v_\perp^2/h^4$; inserting the linear-channel strength $\alpha^2\sim e_a/(\pi T^2)$ gives the crossover $h^4\sim2\pi T^2v_\perp^2/e_a\sim\pi T^4$, i.e.\ $h\sim T\pi^{1/4}$. The additive combination of the two saturating fractions is exact at both ends and, in the interior of the two-channel region, agrees with the exact root of $\ASR(\alpha)=a$ computed from Theorem~\ref{thm:main} to within $10$--$30\%$; the phase diagram of Figure~\ref{fig:phase} uses the exact root for the contours and the two extreme alignments for the slopes.

\section{Additional experimental results}
\label{app:exp}
Table~\ref{tab:cacc_asr_results} reports the clean accuracy and the attack success rate of the trained quadratic neuron (feature-learning regime) and of the trained linear model (linear regime) on the four datasets of Section~\ref{sec:experiments}, for poison fractions $\pi\in\{0.01,0.02,0.05\}$ and trigger strengths $\alpha\in\{1,3,5\}$, with the protocol of Section~\ref{sec:experiments}. In every case the clean accuracy is unchanged by the poison, while the attack success rate is uniformly larger in the feature-learning regime.

\begin{table}[h]
\centering
\caption{Clean Accuracy (CACC) and Attack Success Rate (ASR) for different values of poison fraction ($\pi$) and trigger strength ($\alpha$).}
\label{tab:cacc_asr_results}
\resizebox{\textwidth}{!}{%
\begin{tabular}{ll cccccc c cccccc}
\toprule
& & \multicolumn{6}{c}{\textbf{Rich Regime}} & & \multicolumn{6}{c}{\textbf{Lazy Regime}} \\
\cmidrule{3-8} \cmidrule{10-15}
\textbf{Dataset} & \textbf{$\pi$} & \multicolumn{2}{c}{$\alpha=1.0$} & \multicolumn{2}{c}{$\alpha=3.0$} & \multicolumn{2}{c}{$\alpha=5.0$} & & \multicolumn{2}{c}{$\alpha=1.0$} & \multicolumn{2}{c}{$\alpha=3.0$} & \multicolumn{2}{c}{$\alpha=5.0$} \\
& & CACC & ASR & CACC & ASR & CACC & ASR & & CACC & ASR & CACC & ASR & CACC & ASR \\
\midrule
 & 0.01 & \cellcolor{myteal!93!white} 0.93 & \cellcolor{myorange!7!white} 0.08 & \cellcolor{myteal!93!white} 0.93 & \cellcolor{myorange!30!white} 0.30 & \cellcolor{myteal!93!white} 0.93 & \cellcolor{myorange!70!white} 0.70 & & \cellcolor{myteal!93!white} 0.93 & \cellcolor{myorange!8!white} 0.08 & \cellcolor{myteal!93!white} 0.93 & \cellcolor{myorange!14!white} 0.14 & \cellcolor{myteal!93!white} 0.93 & \cellcolor{myorange!32!white} 0.33 \\
\multirow{3}{*}{\textbf{Synthetic}} & 0.02 & \cellcolor{myteal!93!white} 0.93 & \cellcolor{myorange!8!white} 0.08 & \cellcolor{myteal!93!white} 0.93 & \cellcolor{myorange!47!white} 0.47 & \cellcolor{myteal!93!white} 0.93 & \cellcolor{myorange!83!white} 0.84 & & \cellcolor{myteal!93!white} 0.93 & \cellcolor{myorange!8!white} 0.09 & \cellcolor{myteal!92!white} 0.93 & \cellcolor{myorange!21!white} 0.21 & \cellcolor{myteal!92!white} 0.93 & \cellcolor{myorange!51!white} 0.52 \\
 & 0.05 & \cellcolor{myteal!93!white} 0.93 & \cellcolor{myorange!10!white} 0.10 & \cellcolor{myteal!92!white} 0.93 & \cellcolor{myorange!67!white} 0.67 & \cellcolor{myteal!93!white} 0.93 & \cellcolor{myorange!92!white} 0.92 & & \cellcolor{myteal!92!white} 0.93 & \cellcolor{myorange!11!white} 0.12 & \cellcolor{myteal!92!white} 0.92 & \cellcolor{myorange!42!white} 0.43 & \cellcolor{myteal!92!white} 0.93 & \cellcolor{myorange!74!white} 0.74 \\
\midrule
 & 0.01 & \cellcolor{myteal!97!white} 0.97 & \cellcolor{myorange!97!white} 0.97 & \cellcolor{myteal!97!white} 0.97 & \cellcolor{myorange!99!white} 0.99 & \cellcolor{myteal!97!white} 0.97 & \cellcolor{myorange!99!white} 1.00 & & \cellcolor{myteal!85!white} 0.85 & \cellcolor{myorange!14!white} 0.15 & \cellcolor{myteal!85!white} 0.85 & \cellcolor{myorange!27!white} 0.28 & \cellcolor{myteal!85!white} 0.85 & \cellcolor{myorange!59!white} 0.59 \\
\multirow{3}{*}{\textbf{MNIST}} & 0.02 & \cellcolor{myteal!97!white} 0.97 & \cellcolor{myorange!98!white} 0.99 & \cellcolor{myteal!97!white} 0.97 & \cellcolor{myorange!99!white} 1.00 & \cellcolor{myteal!97!white} 0.97 & \cellcolor{myorange!99!white} 1.00 & & \cellcolor{myteal!85!white} 0.85 & \cellcolor{myorange!16!white} 0.16 & \cellcolor{myteal!85!white} 0.85 & \cellcolor{myorange!44!white} 0.44 & \cellcolor{myteal!85!white} 0.85 & \cellcolor{myorange!85!white} 0.86 \\
 & 0.05 & \cellcolor{myteal!97!white} 0.97 & \cellcolor{myorange!99!white} 0.99 & \cellcolor{myteal!97!white} 0.97 & \cellcolor{myorange!99!white} 1.00 & \cellcolor{myteal!97!white} 0.97 & \cellcolor{myorange!99!white} 1.00 & & \cellcolor{myteal!85!white} 0.85 & \cellcolor{myorange!22!white} 0.22 & \cellcolor{myteal!85!white} 0.85 & \cellcolor{myorange!82!white} 0.82 & \cellcolor{myteal!85!white} 0.85 & \cellcolor{myorange!99!white} 0.99 \\
\midrule
 & 0.01 & \cellcolor{myteal!96!white} 0.97 & \cellcolor{myorange!94!white} 0.95 & \cellcolor{myteal!96!white} 0.97 & \cellcolor{myorange!98!white} 0.99 & \cellcolor{myteal!96!white} 0.97 & \cellcolor{myorange!99!white} 0.99 & & \cellcolor{myteal!93!white} 0.94 & \cellcolor{myorange!7!white} 0.07 & \cellcolor{myteal!93!white} 0.94 & \cellcolor{myorange!10!white} 0.10 & \cellcolor{myteal!93!white} 0.94 & \cellcolor{myorange!25!white} 0.25 \\
\multirow{3}{*}{\textbf{Fashion}} & 0.02 & \cellcolor{myteal!96!white} 0.97 & \cellcolor{myorange!97!white} 0.98 & \cellcolor{myteal!96!white} 0.97 & \cellcolor{myorange!99!white} 1.00 & \cellcolor{myteal!96!white} 0.97 & \cellcolor{myorange!99!white} 1.00 & & \cellcolor{myteal!93!white} 0.94 & \cellcolor{myorange!7!white} 0.08 & \cellcolor{myteal!93!white} 0.94 & \cellcolor{myorange!18!white} 0.18 & \cellcolor{myteal!93!white} 0.94 & \cellcolor{myorange!64!white} 0.64 \\
 & 0.05 & \cellcolor{myteal!96!white} 0.97 & \cellcolor{myorange!99!white} 0.99 & \cellcolor{myteal!96!white} 0.97 & \cellcolor{myorange!99!white} 1.00 & \cellcolor{myteal!96!white} 0.97 & \cellcolor{myorange!99!white} 1.00 & & \cellcolor{myteal!93!white} 0.94 & \cellcolor{myorange!9!white} 0.10 & \cellcolor{myteal!93!white} 0.94 & \cellcolor{myorange!61!white} 0.61 & \cellcolor{myteal!93!white} 0.94 & \cellcolor{myorange!97!white} 0.97 \\
\midrule
 & 0.01 & \cellcolor{myteal!71!white} 0.71 & \cellcolor{myorange!52!white} 0.53 & \cellcolor{myteal!71!white} 0.72 & \cellcolor{myorange!96!white} 0.97 & \cellcolor{myteal!71!white} 0.71 & \cellcolor{myorange!99!white} 0.99 & & \cellcolor{myteal!66!white} 0.66 & \cellcolor{myorange!33!white} 0.33 & \cellcolor{myteal!66!white} 0.66 & \cellcolor{myorange!27!white} 0.28 & \cellcolor{myteal!66!white} 0.66 & \cellcolor{myorange!41!white} 0.42 \\
\multirow{3}{*}{\textbf{CIFAR10}} & 0.02 & \cellcolor{myteal!71!white} 0.71 & \cellcolor{myorange!73!white} 0.73 & \cellcolor{myteal!71!white} 0.71 & \cellcolor{myorange!98!white} 0.99 & \cellcolor{myteal!71!white} 0.71 & \cellcolor{myorange!99!white} 1.00 & & \cellcolor{myteal!66!white} 0.66 & \cellcolor{myorange!37!white} 0.38 & \cellcolor{myteal!66!white} 0.66 & \cellcolor{myorange!52!white} 0.52 & \cellcolor{myteal!65!white} 0.66 & \cellcolor{myorange!86!white} 0.87 \\
 & 0.05 & \cellcolor{myteal!70!white} 0.71 & \cellcolor{myorange!92!white} 0.93 & \cellcolor{myteal!70!white} 0.71 & \cellcolor{myorange!99!white} 1.00 & \cellcolor{myteal!70!white} 0.71 & \cellcolor{myorange!99!white} 1.00 & & \cellcolor{myteal!65!white} 0.65 & \cellcolor{myorange!50!white} 0.50 & \cellcolor{myteal!65!white} 0.65 & \cellcolor{myorange!92!white} 0.93 & \cellcolor{myteal!65!white} 0.65 & \cellcolor{myorange!99!white} 1.00 \\
\bottomrule
\end{tabular}%
}
\end{table}

\end{document}